\documentclass{article}

\PassOptionsToPackage{numbers,sort&compress}{natbib}
\usepackage[preprint]{neurips_2026}

\usepackage[utf8]{inputenc} % allow utf-8 input
\usepackage[T1]{fontenc}    % use 8-bit T1 fonts
\usepackage{hyperref}       % hyperlinks
\usepackage{url}            % simple URL typesetting
\usepackage{booktabs}       % professional-quality tables
\usepackage{amsfonts}       % blackboard math symbols
\usepackage{nicefrac}       % compact symbols for 1/2, etc.
\usepackage{microtype}      % microtypography
\usepackage[most]{tcolorbox}
\usepackage{xspace} 

\usepackage{subcaption}
\usepackage{amsthm}
\usepackage{algorithm} 
\usepackage{algpseudocode}
\usepackage{siunitx}
\usepackage{color , colortbl}
\definecolor{tabhighlight}{rgb}{0.925,1,1}
\usepackage{wrapfig}
\usepackage{multirow}
\usepackage{adjustbox}
\usepackage[table]{xcolor} % 用于行背景色
\definecolor{Gray}{gray}{0.9} % 定义浅灰色
\usepackage{amsmath} 
\usepackage{pifont}
\usepackage{amssymb}
\usepackage{enumitem}
\setlist[itemize]{leftmargin=1.2em}
\usepackage{array}
\newcolumntype{C}[1]{>{\centering\arraybackslash}p{#1}}
\usepackage{tabularx}
\usepackage{array}

\newcolumntype{T}{>{\hsize=0.9\hsize\linewidth=\hsize\centering\arraybackslash}X}
\newcolumntype{M}{>{\hsize=0.8\hsize\linewidth=\hsize\centering\arraybackslash}X}

\usepackage{makecell}

\title{MER-R1: Multimodal Emotion Reasoning via Slow-Fast Thinking Synergy}

\author{%
\normalfont
\begin{tabular}{c}
\textbf{Zhiyuan Han}\textsuperscript{1,2,4}\quad
\textbf{Beier Zhu}\textsuperscript{1,$\dagger$}\quad
\textbf{Wenwen Tong}\textsuperscript{2}\quad
\textbf{Chengwei Qin}\textsuperscript{3}\quad
\textbf{Xinyi Wang}\textsuperscript{1}\\
\textbf{Jiayu Zhang}\textsuperscript{1}\quad
\textbf{Jiangnan Chen}\textsuperscript{2}\quad
\textbf{Hewei Guo}\textsuperscript{2}\quad
\textbf{Dongchuan Ran}\textsuperscript{2}\quad
\textbf{Lewei Lu}\textsuperscript{2}\quad
\textbf{Xun Yang}\textsuperscript{1,$\dagger$} \\[0.3em]
{\normalfont $^{1}$University of Science and Technology of China} \quad
{\normalfont $^{2}$SenseTime Research} \\
{\normalfont $^{3}$The Hong Kong University of Science and Technology (Guangzhou)} \\
{\normalfont $^{4}$Institute of Artificial Intelligence, Hefei Comprehensive National Science Center} \\[0.3em]
{\normalfont\texttt{beier.zhu@ustc.edu.cn}, \quad \texttt{xyang21@ustc.edu.cn}} \\
{\normalfont $^{\dagger}$Corresponding authors.}
\end{tabular}
}

\begin{document}
\maketitle

\newcommand{\zbe}[1]{{\color{blue}#1}}

\def\eg{\emph{e.g.}} 
\def\Eg{\emph{E.g}}
\def\ie{\emph{i.e.}} 
\def\Ie{\emph{I.e}}
\def\cf{\emph{cf} } 
\def\Cf{\emph{Cf}}
\def\etc{\emph{etc}} 
\def\vs{\emph{vs}}
\def\wrt{w.r.t. } 
\def\dof{d.o.f}
\def\iid{i.i.d} 
\def\wolog{w.l.o.g}
\def\etal{\emph{et al}}

\def\ourmethod{\texttt{MER-R1}}
\def\ourea{\texttt{EA}}
\def\ourobj{\texttt{Disentangled Macro Objectives}}
\def\ourreward{\texttt{Slow-Fast Confidence Synergy}}

% ---------- Finding / Observation box ----------

% brighter academic blue, close to the brightness of the green example
\definecolor{findingblue}{HTML}{2B8AC6}

\newcounter{finding}
\newcommand{\finding}[1]{%
    \refstepcounter{finding}%
    % \vspace{0.1em}
    \begin{tcolorbox}[
        enhanced,
        breakable,
        colback=findingblue!6,
        colframe=findingblue!95,
        boxrule=0.75pt,
        arc=4pt,
        boxsep=3pt,
        left=4pt,
        right=4pt,
        top=4pt,
        bottom=4pt,
        before skip=8pt,
        after skip=8pt,
        drop shadow=gray!25!white
    ]
    \noindent\textbf{\textit{Finding~\thefinding:}} #1
    \end{tcolorbox}
}

\newtheorem{definition}{Definition}
\newtheorem{theorem}{Theorem}
\newtheorem{assumption}{Assumption}
\newtheorem{lemma}{Lemma}
\newtheorem{proposition}{Proposition}
\newtheorem{corollary}{Corollary}

\newtheoremstyle{restatedlemma}
  {\topsep}       % Space above
  {\topsep}       % Space below
  {\itshape}      % Body font
  {}              % Indent amount
  {\bfseries}     % Theorem head font
  {.}             % Punctuation after theorem head
  {.5em}          % Space after theorem head
  {\thmname{#1} \thmnumber{#2} (\thmnote{#3})} % Theorem head spec (can be left empty, meaning ‘normal’)

\newtheoremstyle{restatedproposition}
  {\topsep}       % Space above
  {\topsep}       % Space below
  {\itshape}      % Body font
  {}              % Indent amount
  {\bfseries}     % Theorem head font
  {.}             % Punctuation after theorem head
  {.5em}          % Space after theorem head
  {\thmname{#1} \thmnumber{#2} (\thmnote{#3})} % Theorem head spec (can be left empty, meaning ‘normal’)

\theoremstyle{restatedlemma}
\newtheorem*{restatedlemma}{Restated Lemma}

\theoremstyle{restatedproposition}
\newtheorem*{restatedproposition}{Restated Proposition}

\begin{abstract}

We find that explicit reasoning does not necessarily translate into better multimodal emotion recognition (MER) accuracy, even though it makes predictions more interpretable. Specifically, for reasoning-based MLLMs, \textit{fast thinking} by triggering direct answers often outperforms \textit{slow thinking} after deliberative reasoning.
Our empirical  analyses show that fast thinking improves recall with broader and more confident predictions, whereas slow thinking favors precision through conservative filtering of incorrect categories.
Building on these insights, we propose \ourmethod, a reinforcement learning framework that turns slow-fast complementarity into explicit optimization.
Dual-objective disentanglement separates recall and precision into two optimization signals, allowing them to be jointly optimized rather than traded off against each other.
Slow-fast confidence calibration further aligns the final slow-thinking answer with fast-thinking intuition, strengthening correct emotions while suppressing incorrect ones.
In this way, \ourmethod~unifies the recall-oriented intuition of fast thinking with the precision-oriented selectivity of slow thinking. 
We further provide theoretical justification for this synergy, showing that it mitigates variance-induced interference during optimization.
 Extensive experiments on MER-UniBench and MME-Emotion show that \ourmethod~achieves SoTA performance and makes reasoning genuinely benefit emotion recognition.
\end{abstract}
\section{Introduction}
\label{introduction}

Recent advances in Multimodal Large Language Models (MLLMs)~\cite{hurst2024gpt,singh2025openai,xu2025qwen2} are transforming affective computing~\cite{picard2000affective,lian2025mer,shou2025multimodal}, shifting multimodal emotion recognition~\cite{lian2023mer,lian2024mer,poria2019meld,busso2008iemocap}  from fixed-label prediction to \textit{Open-Vocabulary Multimodal Emotion Recognition} (OV-MER)~\cite{lian2025affectgpt,lian2024ov}, where models   recognize diverse and open-ended human emotions. To improve trustworthiness~\cite{lian2023explainable}, recent RLVR-based approaches~\cite{guo2025deepseek,lian2025affectgptr1,yang2025humanomniv2} further equip emotion MLLMs with explicit reasoning chains that ground emotion predictions in visual, acoustic, and textual cues.

Despite this promise, we find that such reasoning does not necessarily lead to stronger emotion recognition. Specifically, we consider two thinking modes for an R1-like emotion reasoning model~\cite{xu2025qwen2,lian2025affectgptr1}: \textbf{(1) Slow thinking} produces an answer after standard reasoning process, and \textbf{(2) Fast thinking} elicits an answer directly without reasoning. Despite well-formed reasoning traces (Figure~\ref{fig:fig1}~(a)), slow thinking consistently underperforms fast
thinking on current OV-MER benchmarks such as MER-UniBench~\cite{lian2025affectgpt,lian2024ov} (Figure~\ref{fig:fig1}~(b)). This reveals a \textit{thinking paradox} in MER: 

\begin{center}
\textit{Reasoning improves interpretability, yet fails to improve recognition accuracy itself.}
\end{center}

To understand this paradox, we analyze fast and slow thinking in Section~\ref{sec:findings} from prediction and confidence perspectives. At the prediction level, fast thinking exhibits stronger System-1 intuition~\cite{li2025system}, producing broader emotion coverage and higher recall, whereas slow thinking makes more concentrated predictions and favors precision. At the confidence level, fast thinking is more confident on correct emotions, while slow thinking more conservatively suppresses incorrect categories but may also lower confidence in correct ones. These findings suggest two \textbf{slow-fast synergy desiderata} for effective MER reasoning: preserving the recall-oriented coverage and correct-category confidence of fast thinking, while retaining the precision-oriented selectivity of slow thinking.

\begin{figure*}[t]
\centering
\includegraphics[width=\linewidth]{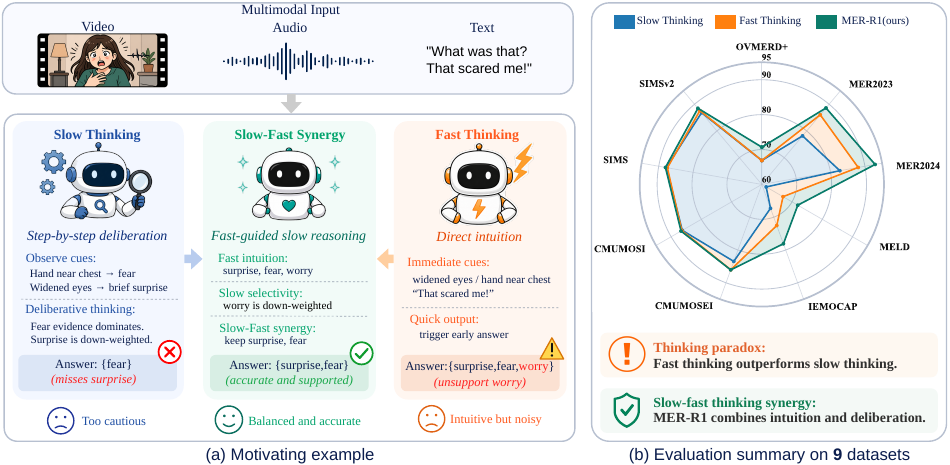}
\captionsetup{skip=6pt} 
\caption{\textbf{Thinking paradox and slow-fast synergy in MER.} (a) Motivating example: slow thinking is cautious but may miss valid emotions, whereas fast thinking improves coverage but introduces noise. Slow-fast synergy combines their complementary strengths. (b) Evaluation summary on 9 datasets: Fast thinking counter-intuitively outperforms slow thinking. }
\label{fig:fig1}
\vspace{-2mm}
\end{figure*}

To realize these desiderata, we propose \ourmethod, a reinforcement learning (RL) framework with two components.
First, \textit{dual-objective disentanglement} separates recall and precision into two optimization targets and preserves them in both reward and advantage spaces.
Unlike prior MER reasoning methods that directly optimize $F_1$ reward~\cite{lian2025affectgptr1}, our design prevents recall and precision interference during advantage normalization.
Our theoretical analysis further shows that standard $F_1$ optimization can be biased toward the objective with larger group-wise variation, whereas the dual-objective advantage establishes a balanced coupling to both recall and precision signals.
Second, \textit{slow-fast confidence calibration} transfers fast thinking's confidence behavior to the final slow-thinking answer while preserving
slow thinking's selectivity. Specifically, we compare category-level confidence between the two thinking modes and calibrate correct and incorrect categories in opposite directions: correct categories are encouraged to retain or exceed fast-thinking confidence, whereas incorrect categories are encouraged to remain suppressed. This enables \ourmethod~to strengthen correct emotions without absorbing the noisy over-coverage of
fast thinking.

\textbf{Our contributions} are three-fold:
\textbf{(1) Findings:} We reveal a \textit{thinking paradox} in MER: for reasoning-based MLLMs, fast thinking can outperform slow thinking. We diagnose this paradox from prediction and confidence perspectives and derive two desiderata for slow-fast synergy.
\textbf{(2) Methodology:} We propose \ourmethod, an RL framework with dual-objective disentanglement and slow-fast confidence calibration, enabling joint recall--precision optimization and confidence calibration over correct and incorrect categories.
\textbf{(3) Performance:} \ourmethod~achieves state-of-the-art results on MER-UniBench and MME-Emotion, and validates that satisfying the two slow-fast synergy desiderata makes reasoning genuinely beneficial for emotion recognition rather than merely improving interpretability.
 
\section{Related work}
\label{sec:relatedwork}

\noindent\textbf{MLLM reasoning.}
Recent multimodal large language models (MLLMs)~\cite{hurst2024gpt,Qwen3-Omni,xu2025qwen2,liu2024deepseek,liu2024visual} have acquired reasoning ability through reinforcement learning with verifiable rewards (RLVR)~\cite{guo2025deepseek,li2025system,11515172}, allowing them to explicitly reason over visual, acoustic, and textual signals during the reasoning process. Building on this capability, recent work~\cite{meng2025mm,wang2025perception,liu2025visual,wang2025vl} seeks to tackle increasingly complex multimodal reasoning tasks. However, recent studies suggest that slow thinking is not always better than fast thinking. VideoAuto-R1~\cite{liu2026videoauto} observes this phenomenon on video understanding tasks and addresses it with an \textit{answer-think-answer} paradigm that rewards both the early and final answers while assigning a larger weight to the latter. However, this design remains at the output level and does not model why fast and slow thinking differ, let alone how to realize synergy between them. In contrast, our work studies their underlying complementarity and turns it into an explicit synergy framework for multimodal reasoning, thereby making slow thinking genuinely more effective.

\noindent\textbf{Multimodal emotion understanding.}
Multimodal emotion understanding has recently shifted from closed-set classification to more open and reasoning-oriented settings~\cite{lian2023explainable,cheng2024emotion,han2025benchmarking,lian2025affectgpt}. Open-Vocabulary Multimodal Emotion Recognition (OV-MER)~\cite{lian2024ov} requires models to predict free-form emotion words beyond a fixed label space, while MME-Emotion~\cite{zhang2025mme} further evaluates both recognition and reasoning ability with LLM-based judges. Recent emotion MLLMs~\cite{zhao2025r1,yang2025humanomniv2,lian2025affectgptr1,rha2026emotion} increasingly introduce reasoning for better interpretability and reinforcement learning in an attempt to further improve emotion recognition. However, whether reasoning itself truly improves emotion recognition remains unclear. We identify and systematically study the \emph{thinking paradox} in multimodal emotion reasoning, where fast thinking can outperform slow thinking on recognition, and resolve it through an explicit slow-fast thinking synergy framework.
\section{Understanding the thinking paradox}
\label{sec:findings}

This section investigates the counter-intuitive finding that fast thinking  outperforms slow thinking in MER tasks (Figure~\ref{fig:fig1}~(b)).
We show that fast thinking improves recall with more diverse and confident predictions, whereas slow thinking is more conservative, favoring precision by suppressing incorrect categories but lowering confidence in correct ones. Under current evaluation, which mainly relies on recall or F1 metrics, the advantage of slow thinking is not fully reflected. We begin by introducing the emotion reasoning formulation and the emotion-wheel-based evaluation protocol.

\subsection{Problem formulation and emotion wheel-based evaluation}

\begin{wrapfigure}{r}{0.33\linewidth}
    \vspace{-1em}
    \centering
    \includegraphics[width=\linewidth]{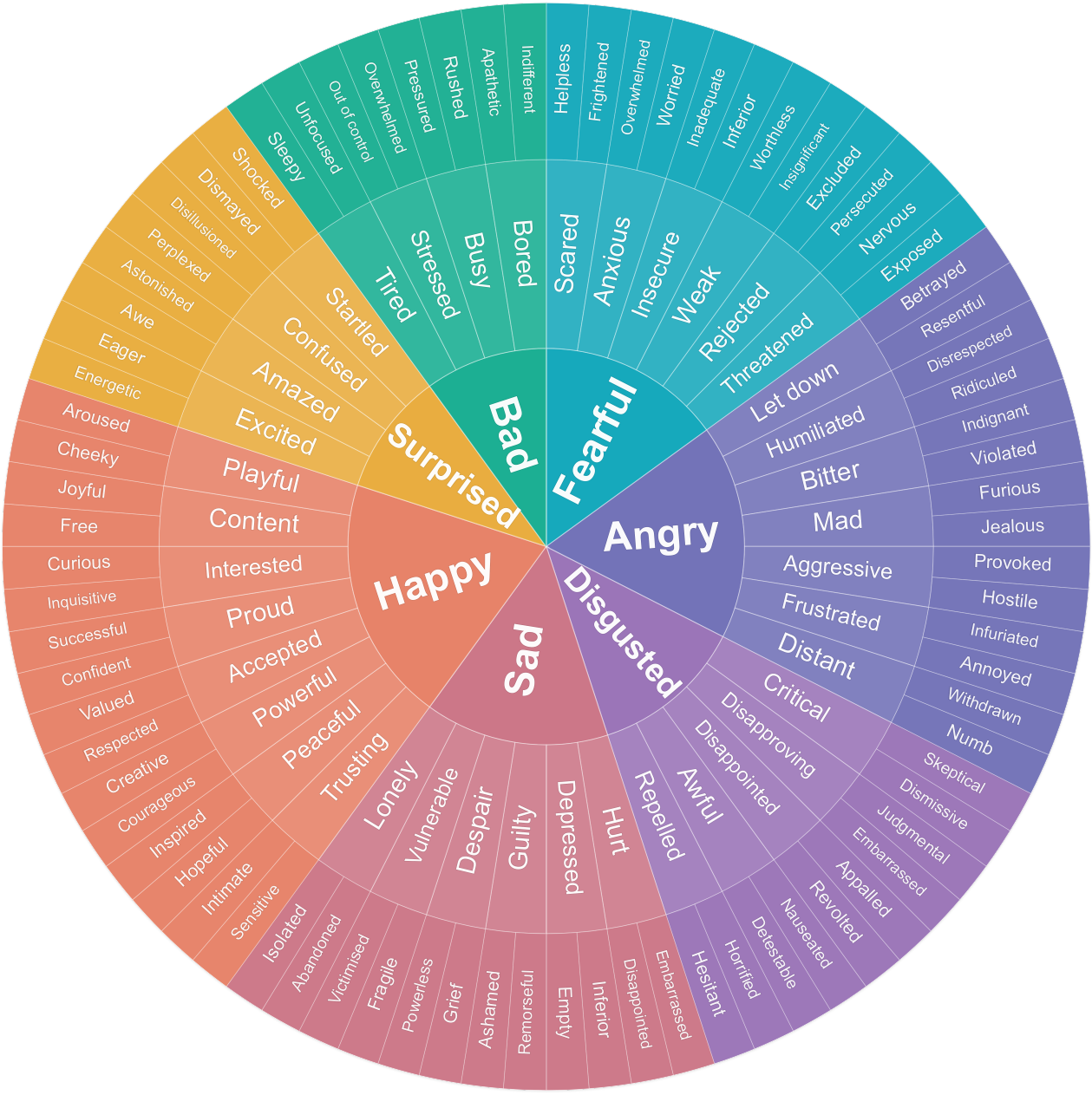}
    \caption{{Emotion wheel.} }
    \label{fig:ew}
    \vspace{-1em}
\end{wrapfigure}

Given a multimodal input $x$, the policy $\pi_\theta$ generates an output
$y=(y^{\mathsf{cot}}, y^{\mathsf{ans}})$, where $y^{\mathsf{cot}}$ denotes the
chain-of-thought reasoning trace and $y^{\mathsf{ans}}$ denotes the final emotion answer.
Slow thinking follows the standard think-then-answer format, while fast thinking is elicited
with an answer-only prompt that directly produces $y^{\mathsf{ans}}$ without reasoning,
\ie, $y^{\mathsf{cot}}=\emptyset$. 
Since both the model answer $y^{\mathsf{ans}}$ and the ground-truth emotions
$y^{\mathsf{gt}}$ may contain multiple free-form emotion words, we follow prior
work~\cite{lian2025affectgpt,lian2024ov} and map them through the emotion wheel
to level-1 categories. For example, as shown in Figure~\ref{fig:ew}, the \textbf{innermost sectors define
level-1 emotion categories}, such as \texttt{angry}, \texttt{happy}, and
\texttt{sad}. Fine-grained emotion words in the outer sectors are then mapped
to these categories; for instance, \texttt{joyful}, \texttt{content}, and
\texttt{cheerful} are mapped to the level-1 category \texttt{happy}.

Let $\hat{\mathcal{Y}}$ and $\mathcal{G}$ denote the predicted and ground-truth
level-1 category sets obtained from $y^{\mathsf{ans}}$ and $y^{\mathsf{gt}}$,
respectively. We evaluate the prediction by set-level matching. Specifically, recall, precision, and $F_1$ are defined as:
$\mathsf{R}
=
\tfrac{|\hat{\mathcal{Y}}\cap \mathcal{G}|}{|\mathcal{G}|}
, 
\mathsf{P}
=
\tfrac{|\hat{\mathcal{Y}}\cap \mathcal{G}|}{|\hat{\mathcal{Y}}|},
\mathsf{F} = \tfrac{2\mathsf{PR}}{\mathsf{P+R}}.
$
Current OV-MER evaluation mainly relies on recall-based metrics~\cite{lian2025affectgpt}
or F1-based metrics~\cite{lian2024ov}. We further examine precision
and category-level confidence to diagnose the thinking paradox.

\begin{figure*}[t]
\centering
\includegraphics[width=\linewidth]{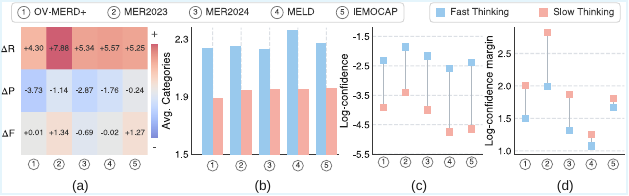}
\captionsetup{skip=6pt} 
\caption{\textbf{Analyses of the thinking paradox on five MER benchmarks.} (a) Performance gaps in recall ($\mathsf{R}$), precision ($\mathsf{P}$), and $F_1$ ($\mathsf{F}$), where $\Delta \mathsf{M}=\mathsf{M}_{\mathsf{fast}}-\mathsf{M}_{\mathsf{slow}}$ for $\mathsf{M}\in\{\mathsf{R},\mathsf{P},\mathsf{F}\}$. (b) Prediction diversity by the average number of emotion categories. (c) Confidence on ground-truth emotion categories. (d) Confidence margin between ground-truth and hard-negative emotion categories.  }
\label{fig:fig2}
\vspace{-2mm}
\end{figure*}

\subsection{Precision--recall trade-off}

In Figure~\ref{fig:fig2}~(a), we report the performance gaps between fast and slow thinking in terms of precision, recall, and F1 on five widely used datasets~\cite{lian2024ov,lian2025mer,lian2023mer,lian2024mer,poria2019meld,busso2008iemocap}. Fast thinking consistently leads in recall, while slow thinking leads in precision; overall, fast thinking retains a slight advantage in F1.
Figure~\ref{fig:fig2}~(b) reports the average number of emotion categories predicted by fast and slow thinking.
The results show a clear difference in prediction diversity: fast thinking covers more emotion categories, whereas slow thinking produces more concentrated predictions.  Together, these two analyses lead to the following finding:

\finding{\textit{Fast thinking improves recall with broader emotion coverage, whereas slow thinking favors precision with more concentrated predictions.}}

Previous benchmarks such as MER-UniBench~\cite{lian2025affectgpt} mainly use hitrate as the evaluation metric, which reflects recall but does not penalize over-prediction (low precision).
This helps explain why fast thinking appears more favorable than slow thinking under current evaluation.

\subsection{Confident vs. conservative predictions}
To compare the confidence behavior, we first aggregate the probability mass of free-form emotion words mapped to the  level-1 category (Definition~\ref{dfn:1}).
We then contrast this confidence between ground-truth and hard-negative categories to define a confidence margin (Definition~\ref{dfn:2}).

\begin{definition}[\textbf{Category-level log-confidence}]\label{dfn:1}
   For a level-1 emotion category $c$, let $\mathcal{V}(c)$ denote the set of emotion words associated with $c$ defined by the emotion wheel~\cite{lian2024ov}. Given a thinking mode $m \in \{\mathsf{fast}, \mathsf{slow}\}$, we define the category-level confidence of $c$ as:
\begin{equation}
\mathsf{conf}^m(c)
=
\log \sum_{e \in \mathcal{V}(c)} p^m(e),
\label{eq:cat_conf}
\end{equation}
where $p^m(e)$ denotes the probability assigned to the first token of emotion word $e$. 
\end{definition}

\begin{definition}[\textbf{Confidence margin}]\label{dfn:2}
Let $\mathcal{G}$ denote the set of ground-truth level-1 categories for a sample, and let $\mathcal{H}^m$ denote the set of top-3 hard-negative categories. We define
$
\mathsf{conf}_{\mathsf{gt}}^m
=
\frac{1}{|\mathcal{G}|}
\sum_{c \in \mathcal{G}} \mathsf{conf}^m(c)
$ and $
\mathsf{conf}_{\mathsf{neg}}^m
=
\frac{1}{|\mathcal{H}^m|}
\sum_{c \in \mathcal{H}^m} \mathsf{conf}^m(c)
\label{eq:cat_conf_stats}
$
as the average confidence assigned to correct and incorrect categories, respectively. Their relative confidence margin is defined as:
\begin{equation}
\gamma^m
=
\mathsf{conf}_{\mathsf{gt}}^m
-
\mathsf{conf}_{\mathsf{neg}}^m.
\label{eq:cat_margin}
\end{equation}
\end{definition}

Figure~\ref{fig:fig2}(c) reports the category-level log-confidence on ground-truth emotion categories, showing that fast thinking consistently assigns higher confidence to correct emotions.
Figure~\ref{fig:fig2}(d) reports the confidence margin between ground-truth and hard-negative categories, showing that slow thinking yields larger margins by suppressing hard negatives.
Together, these results lead to Finding 2:

\finding{\textit{Fast thinking is more confident on correct emotions, while slow thinking is more conservative, better suppressing incorrect categories but under-confidence correct ones.}}

\subsection{Desiderata for slow-fast thinking synergy}
\label{sec:desiderata}

These observations suggest that an effective emotion reasoning model should not
simply prefer one thinking mode over the other. Instead, it should satisfy two
desiderata. 

\begin{itemize}
    \item \textbf{Desideratum I}: At the prediction level, the model should preserve the
recall-oriented coverage of fast thinking while maintaining the precision-oriented
selectivity of slow thinking. 
    \item \textbf{Desideratum II}: At the confidence level, the model should preserve
fast thinking's strong confidence on correct categories, while retaining slow
thinking's suppression of incorrect ones.
\end{itemize}

These desiderata motivate our slow-fast thinking synergy framework in
Section~\ref{sec:method}.
\section{Method}
\label{sec:method}

We propose \ourmethod, an RL framework for slow-fast thinking synergy.
Section~\ref{sec:baseline} introduces the GRPO baseline and its limitations. Section~\ref{sec:method1} proposes dual-objective disentanglement to preserve the recall-oriented coverage and the precision-oriented selectivity in the advantage
space (Desideratum I). Section~\ref{sec:method2} introduces fast-thinking
confidence calibration to strengthen correct categories while suppressing incorrect ones (Desideratum II). The overall pipeline is illustrated in Figure~\ref{fig:fig3}.

\subsection{Preliminary on GRPO baseline}\label{sec:baseline}

Reasoning-oriented MLLMs are commonly trained with a two-stage recipe: a
supervised fine-tuning (SFT) stage,
followed by GRPO-style RL.
Given input $x$, GRPO samples $K$ outputs $\{y_i\}_{i=1}^{K}$ from
$\pi_{\theta_{\mathsf{old}}}$, computes rewards $\{R_i\}_{i=1}^{K}$, normalizes
them into relative advantages $A_i=\tfrac{R_i-\mu}{\sigma}$, and optimizes:
\begin{equation}
\mathcal{L}_{\mathsf{GRPO}}(\theta)
=
-\frac{1}{K}\sum_{i=1}^{K}
\min\!\big(r_i A_i,\; \mathrm{clip}(r_i,1-\eta,1+\eta)A_i\big)
+
\beta D_{\mathrm{KL}}(\pi_\theta \,\|\, \pi_{\mathsf{ref}}),
\label{eq:grpo_prelim}
\end{equation}
where $r_i=\tfrac{\pi_\theta(y_i\mid x)}{\pi_{\theta_{\mathsf{old}}}(y_i\mid x)}$ is the importance ratio, and $\beta \ge 0$ controls the KL regularization strength.
As a typical reward design~\cite{lian2025affectgptr1}, the baseline combines an
emotion-wheel-based $F_1$ reward with a format reward:
$ 
R_{\mathsf{base}}
=
R_{\mathsf{F}}
+ R_{\mathsf{fmt}},
\label{eq:base_reward}
$
where  $R_{\mathsf{fmt}}$ encourages  \texttt{<think>} then \texttt{<answer>} structure.

However, the GRPO baseline has two limitations:
the $F_1$-style reward entangles recall and precision, allowing the model to trade off
one for the other, and the training objective does not explicitly preserve the
recall-oriented intuition of fast thinking.  

\begin{figure*}[t]
\centering
\includegraphics[width=\linewidth]{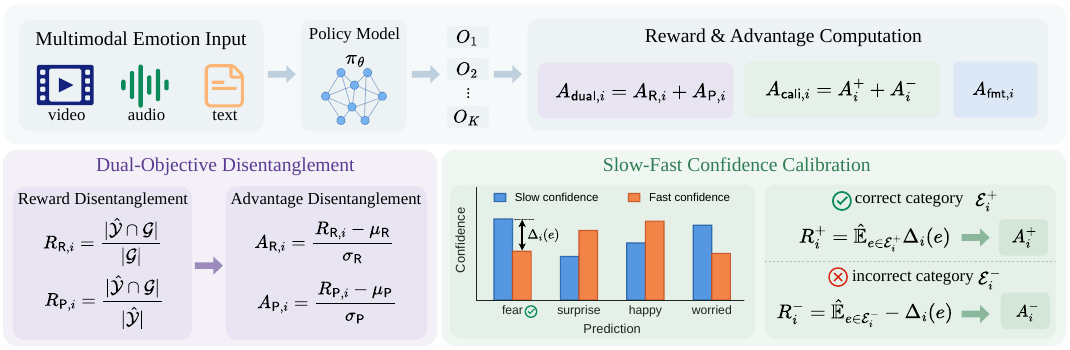}
\captionsetup{skip=6pt} 
\caption{\textbf{Framework of \ourmethod.} Our
\ourmethod~consists of dual-objective disentanglement for Desideratum I and slow-fast confidence calibration for Desideratum II.}
\label{fig:fig3}
\vspace{-2mm}
\end{figure*}

\subsection{Dual-objective disentanglement for Desideratum I}
\label{sec:method1}
To avoid the ambiguous trade-off induced by a single $F_1$-style reward, we explicitly separate the two optimization goals behind emotion recognition: covering correct categories and avoiding spurious ones. We first decompose the task reward into recall and precision rewards, and then compute their group-normalized advantages separately as the optimization signals.

\noindent\textbf{Reward-level disentanglement.}
We decompose $F_1$ reward into recall and precision rewards:
\begin{equation}
R_{\mathsf{R}}
=
\tfrac{|\hat{\mathcal{Y}}\cap \mathcal{G}|}{|\mathcal{G}|},
\qquad
R_{\mathsf{P}}
=
\tfrac{|\hat{\mathcal{Y}}\cap \mathcal{G}|}{|\hat{\mathcal{Y}}|}.
\label{eq:macro_rec_pre}
\end{equation}
This decomposition makes the hidden bi-objective structure of $F_1$ explicit:
$R_{\mathsf{R}}$ promotes coverage of correct categories, while $R_{\mathsf{P}}$
penalizes spurious predictions. Since different recall--precision trade-offs can yield the same $F_1$ value,
separating $R_{\mathsf{R}}$ and $R_{\mathsf{P}}$ avoids an ambiguous optimization target.

\noindent\textbf{Advantage-level disentanglement.}
Given the two rewards $R_{\mathsf{R}}$ and $R_{\mathsf{P}}$, a straightforward GRPO pipeline would first aggregate them into a scalar reward and then apply group normalization to obtain the advantage.
However, when the two rewards have different group-wise variances, the combined scalar reward is dominated by the higher-variance objective, causing the lower-variance objective to be underrepresented in the final advantage.
We therefore first compute group-normalized advantages for $R_{\mathsf{R}}$ and $R_{\mathsf{P}}$ separately then compute the final advantage $A_{\mathsf{dual}}$:
\begin{equation}
A_{\mathsf{R},i}
=
\tfrac{R_{\mathsf{R},i}-\mu_{\mathsf{R}}}{\sigma_{\mathsf{R}}},
\qquad
A_{\mathsf{P},i}
=
\tfrac{R_{\mathsf{P},i}-\mu_{\mathsf{P}}}{\sigma_{\mathsf{P}}}, \qquad A_{\mathsf{dual},i}
=
  A_{\mathsf{R},i}
+
 A_{\mathsf{P},i}.
\label{eq:macro_adv}
\end{equation}

\noindent\textbf{Theoretical justification.}
The following proposition shows that standard $F_1$ optimization can be biased toward the objective with larger normalized variation, whereas our dual-objective disentanglement removes this correlation imbalance (proof in Appendix~\ref{app:theory1}). 

\begin{proposition} 
\label{prop:proposition1}
Let $\mu_{\mathsf{R}},\mu_{\mathsf{P}}$ and
$\sigma_{\mathsf{R}},\sigma_{\mathsf{P}}$ denote the group means and standard
deviations of the recall and precision rewards, and let
$\tau=\frac{\sigma_\mathsf{R}/\mu_\mathsf{R}^2}{\sigma_\mathsf{P}/\mu_\mathsf{P}^2}$ be the normalized group-wise variance ratio.
The correlations between each reward and the standard $F_1$ advantage satisfy
\begin{equation}
\begin{cases}
\mathsf{corr}(R_{\mathsf{R}},A_{\mathsf{F}})
>
\mathsf{corr}(R_{\mathsf{P}},A_{\mathsf{F}}),
& \text{if } \tau>1, \\[1mm]
\mathsf{corr}(R_{\mathsf{R}},A_{\mathsf{F}})
\leq
\mathsf{corr}(R_{\mathsf{P}},A_{\mathsf{F}}),
& \text{if } \tau\leq 1.
\end{cases}
\label{eq:f1_corr_bias}
\end{equation}
In contrast, our dual-objective advantage gives \textbf{balanced correlations}:
\begin{equation}
\mathsf{corr}(R_{\mathsf{R}},A_{\mathsf{dual}})
=
\mathsf{corr}(R_{\mathsf{P}},A_{\mathsf{dual}}).
\end{equation}
\end{proposition}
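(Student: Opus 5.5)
We linearize $F_1$ around the group means, then compare covariances. Within a group, write $R_{\mathsf{R},i}=\mu_{\mathsf{R}}+\delta_{\mathsf{R},i}$ and $R_{\mathsf{P},i}=\mu_{\mathsf{P}}+\delta_{\mathsf{P},i}$. Using the harmonic-mean form $\mathsf{F}=2/(1/\mathsf{R}+1/\mathsf{P})$, a first-order expansion gives
\begin{equation*}
R_{\mathsf{F},i}\approx \mathsf{F}(\mu_{\mathsf{R}},\mu_{\mathsf{P}})+\tfrac{\bar{\mathsf{F}}^2}{2}\Big(\tfrac{\delta_{\mathsf{R},i}}{\mu_{\mathsf{R}}^2}+\tfrac{\delta_{\mathsf{P},i}}{\mu_{\mathsf{P}}^2}\Big),
\end{equation*}
where $\bar{\mathsf{F}}=\mathsf{F}(\mu_{\mathsf{R}},\mu_{\mathsf{P}})$. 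Group normalization removes both the constant and the positive factor $\bar{\mathsf{F}}^2/2$. Hence $A_{\mathsf{F},i}$ is, to first order, the standardized version of $u_i=a\,z_{\mathsf{R},i}+b\,z_{\mathsf{P},i}$, where $z_{\cdot,i}=\delta_{\cdot,i}/\sigma_{\cdot}$ are the standardized rewards, $a=\sigma_{\mathsf{R}}/\mu_{\mathsf{R}}^2$, and $b=\sigma_{\mathsf{P}}/\mu_{\mathsf{P}}^2$. This is exactly why $\tau=a/b$ appears in the statement.

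Correlation is invariant to positive affine maps, so $\mathsf{corr}(R_{\mathsf{R}},A_{\mathsf{F}})=\mathsf{corr}(z_{\mathsf{R}},u)$, and similarly for precision. Let $\rho=\mathsf{corr}(R_{\mathsf{R}},R_{\mathsf{P}})$ be the within-group correlation. Then $\mathrm{Cov}(z_{\mathsf{R}},u)=a+b\rho$ and $\mathrm{Cov}(z_{\mathsf{P}},u)=a\rho+b$, with common denominator $\sqrt{\mathrm{Var}(u)}$. The difference of the two correlations therefore has the sign of $(a-b)(1-\rho)$. Since $\rho\le 1$, this is positive when $\tau>1$ and $\rho<1$, and non-positive when $\tau\le1$, which gives the dichotomy in \eqref{eq:f1_corr_bias}. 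I would state separately the degenerate case $\rho=1$ (perfectly collinear rewards), where the two correlations are equal. It should either be excluded by assumption or noted as the case in which the strict inequality becomes equality.

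For the dual advantage, \eqref{eq:macro_adv} gives $A_{\mathsf{dual}}=z_{\mathsf{R}}+z_{\mathsf{P}}$ exactly, with no approximation. Then $\mathrm{Cov}(z_{\mathsf{R}},A_{\mathsf{dual}})=1+\rho=\mathrm{Cov}(z_{\mathsf{P}},A_{\mathsf{dual}})$, and since both $z$'s have unit variance the two correlations coincide. This is the symmetric case $a=b$ of the same computation.

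The main obstacle is the first step: $F_1$ is nonlinear, so the $F_1$ claim holds only under the first-order (small within-group variation) approximation. I would state that approximation as an explicit assumption, and add that the sign conclusion is robust whenever the remainder term is small relative to $|a-b|(1-\rho)$. The conventions also need care. Standard deviations must be nonzero, with zero-variance groups excluded as GRPO does. Each reward group must be nondegenerate, e.g.\ $\mu_{\mathsf{R}},\mu_{\mathsf{P}}>0$, so that the expansion is well defined.
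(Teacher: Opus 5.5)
Your proposal is correct and follows essentially the same route as the paper: a first-order Taylor expansion of $F_1$ around $(\mu_{\mathsf{R}},\mu_{\mathsf{P}})$, then a covariance computation in which the difference of the two correlations factors as $(1-\rho)$ times a term with the sign of $\tau-1$, and finally the exact identity $\mathrm{Cov}(z_{\mathsf{R}},A_{\mathsf{dual}})=\mathrm{Cov}(z_{\mathsf{P}},A_{\mathsf{dual}})=1+\rho$ for the dual advantage. Your harmonic-mean, standardized parametrization ($a=\sigma_{\mathsf{R}}/\mu_{\mathsf{R}}^2$, $b=\sigma_{\mathsf{P}}/\mu_{\mathsf{P}}^2$) is a tidier packaging of the paper's identity $\alpha_{\mathsf{R}}\sigma_{\mathsf{R}}-\alpha_{\mathsf{P}}\sigma_{\mathsf{P}}=\tfrac{\bar{\mathsf{F}}^2}{2}(a-b)$, and your explicit caveats (excluding $\rho=1$, and stating that the $F_1$ part holds only to first order) correspond to the paper's own non-degeneracy assumption and its use of $\approx$ throughout.
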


\subsection{Slow-fast confidence calibration for Desideratum II}
\label{sec:method2}
For Desideratum II, we propose slow-fast confidence calibration, which guides the final slow-thinking answer to preserve fast thinking's confidence on correct categories while retaining its suppression of incorrect ones. 
For each response $y_i=(y_i^{\mathsf{cot}}, y_i^{\mathsf{ans}})$, we construct two answer-generation distributions:
\begin{equation}
p_{i}^{\mathsf{slow}}(e) = \pi_\theta(\cdot \mid x, y_i^{\mathsf{cot}}, y_{i,<e}^{\mathsf{ans}}),
\qquad
p_{i}^{\mathsf{fast}}(e) = \pi_\theta(\cdot \mid x, y_{i,<e}^{\mathsf{ans}}).
\label{eq:slow_fast_forward}
\end{equation}
For each generated emotion word $e \in y_i^{\mathsf{ans}}$, we map it to its
level-1 category $c(e)$ and compute its slow-fast confidence gap using
Eq.~\eqref{eq:cat_conf}:
\begin{equation}
\Delta_i(e)
=
\mathsf{conf}_i^{\mathsf{slow}}(c(e))
-
\mathsf{conf}_i^{\mathsf{fast}}(c(e)).
\label{eq:delta_token}
\end{equation}

We split the emotion words parsed from $y_i^{\mathsf{ans}}$ into correct ($\mathcal{E}_i^{+}$) and incorrect ($\mathcal{E}_i^{-}$) sets:
\begin{equation}
\mathcal{E}_i^{+}
=
\{e \in y_i^{\mathsf{ans}} \mid c(e)\in \mathcal{G}\},
\qquad
\mathcal{E}_i^{-}
=
\{e \in y_i^{\mathsf{ans}} \mid c(e)\notin \mathcal{G}\}.
\label{eq:emotion_word_partition}
\end{equation}

We use the \textit{slow-fast confidence gap} to calibrate the answer in two opposite
directions. For emotion words mapped to correct categories, we reward a positive
gap so that slow thinking preserves or exceeds the strong confidence of fast
thinking. For incorrect emotion words, we reward a negative
gap so that slow thinking retains its conservative suppression of spurious
predictions:
\begin{equation}
R_i^{+}
=
\frac{1}{|\mathcal{E}_i^{+}|}
\sum_{e\in\mathcal{E}_i^{+}} \Delta_i(e),
\qquad
R_i^{-}
=
\frac{1}{|\mathcal{E}_i^{-}|}
\sum_{e\in\mathcal{E}_i^{-}} -\Delta_i(e).
\label{eq:confidence_rewards}
\end{equation}
 
Following the same disentanglement strategy, we first compute group-normalized
advantages for $R_i^+$ and $R_i^-$ separately, denoted as $A_i^+$ and $A_i^-$,
and then combine them as $A_{\mathsf{cali},i}=A_i^+ + A_i^-$.
The overall advantage is combined as:
\begin{equation}
A_i
=
A_{\mathsf{dual},i}
+
\lambda_{\mathsf{cali}} A_{\mathsf{cali},i}
+
\lambda_{\mathsf{fmt}} A_{\mathsf{fmt},i}.
\label{eq:final_adv}
\end{equation}
Finally, we substitute $A_i$ into Eq.~\eqref{eq:grpo_prelim} for policy optimization.

\section{Experiments}
\label{sec:experiments}

\begin{table*}[t]
\centering

\fontsize{7.5}{10}\selectfont
\setlength{\tabcolsep}{4.8pt}

\caption{\textbf{Main results on MER-UniBench.} \ourmethod~achieves the best overall performance under the official benchmark protocol. Entries marked with $\dagger$ are our reproduced results.}
\label{tab:main}

\begin{tabular}{l c @{\hspace{3pt}} cccc @{\hspace{3pt}} cccc @{\hspace{4pt}} c}
    \toprule

    \multirow{2}{*}{\textbf{Model}} 
    & \multicolumn{1}{c}{\textbf{Fine (F1)}} 
    & \multicolumn{4}{c}{\textbf{Basic Emotion (Hitrate)}} 
    & \multicolumn{4}{c}{\textbf{Sentiment Analysis (WAF)}} 
    & \multirow{2}{*}{\textbf{Mean}} \\
    \cmidrule(lr){2-2} \cmidrule(lr){3-6} \cmidrule(lr){7-10}
    & OV-MERD+ & MER23 & MER24 & MELD & IEMOCAP & MOSI & MOSEI & SIMS & SIMSv2 & \\
    \midrule

    Qwen-Audio~\cite{chu2023qwen}   & 32.36 & 41.85 & 31.61 & 49.09 & 35.47 & 70.09 & 46.90 & 70.73 & 65.26 & 49.26 \\
    SALMONN~\cite{tang2023salmonn}  & 45.00 & 55.53 & 45.38 & 45.62 & 46.84 & 81.00 & 67.03 & 68.69 & 65.93 & 57.89 \\

    VideoChat2~\cite{li2024mvbench} & 39.21 & 33.67 & 54.50 & 36.64 & 48.70 & 66.84 & 54.32 & 69.49 & 70.66 & 52.67 \\
    LLaMA-VID~\cite{li2024llama}    & 45.01 & 50.72 & 57.60 & 42.75 & 46.02 & 61.78 & 63.89 & 69.35 & 67.48 & 56.07 \\
    Chat-UniVi~\cite{jin2024chat}   & 48.00 & 57.62 & 65.67 & 45.61 & 52.37 & 54.53 & 63.18 & 68.15 & 66.36 & 57.94 \\
    mPLUG-Owl~\cite{ye2023mplug}    & 48.18 & 56.86 & 59.89 & 49.11 & 55.54 & 72.40 & 72.91 & 72.13 & 75.00 & 62.45 \\

    PandaGPT~\cite{su2023pandagpt}  & 37.12 & 40.21 & 51.89 & 37.88 & 44.04 & 61.92 & 67.61 & 68.38 & 67.23 & 52.92 \\
    R1-Omni~\cite{zhao2025r1}       & 55.24 & 64.17 & 67.43 & 43.20 & 51.58 & 58.02 & 56.48 & 71.82 & 68.58 & 59.61 \\
    Emotion-LLaMA~\cite{cheng2024emotion} & 52.97 & 59.38 & 73.62 & 46.76 & 55.47 & 66.13 & 67.66 & 78.32 & 77.23 & 64.17 \\

    AffectGPT~\cite{lian2025affectgpt} & 62.52 & 78.54 & 78.80 & 55.65 & 60.54 & 81.30 & 80.90 & \textbf{88.49} & 86.18 & 74.77 \\
    AffectGPT-R1~\cite{lian2025affectgptr1} & 68.39 & 84.51 & \textbf{93.13} & 66.71 & 74.26 & 79.65 & 80.18 & 87.26 & 85.75 & 79.98 \\

    AffectGPT$^{\dagger}$  & 59.89 & 73.69 & 76.90 & 52.09 & 57.74 & 78.30 & 78.51 & 85.28 & 85.31 & 71.96 \\
    AffectGPT-R1$^{\dagger}$  & 64.08 & 81.88 & 80.89 & 57.53 & 65.71 & 80.29 & 80.64 & 85.70 & 83.75 & 75.60 \\

    \textit{Baseline} & 66.86 & 78.16 & 82.70 & 61.45 & 67.28 & 83.48 & 86.31 & 87.78 & 86.84 & 77.87 \\

    \rowcolor{tabhighlight}
    \textbf{\ourmethod} 
    & \textbf{70.68} & \textbf{88.60} & 92.96 & \textbf{71.90} & \textbf{78.10} 
    & \textbf{86.06} & \textbf{86.73} & 88.04 & \textbf{88.48} & \textbf{83.50} \\
    \bottomrule
\end{tabular}
\end{table*}

\begin{table*}[t]
\centering

\fontsize{7.5}{10}\selectfont
\setlength{\tabcolsep}{2.4pt}

\caption{\textbf{Main results on MME-Emotion.} We report the CoT score for each task and all three metrics for the overall average. \ourmethod~achieves the best overall performance.}
\label{tab:mme_emotion_main}

\begin{tabular}{l @{\hspace{4pt}} c c c c c c *{2}{>{\centering\arraybackslash}m{0.8cm}}@{} @{\hspace{8pt}} *{3}{>{\centering\arraybackslash}m{0.8cm}}}
    \toprule

    \multirow{2}{*}{\textbf{Methods}}
    & \multirow{2}{*}{\textbf{ER-Lab}}
    & \multirow{2}{*}{\textbf{ER-Wild}}
    & \multirow{2}{*}{\textbf{FG-ER}}
    & \multirow{2}{*}{\textbf{FG-SA}}
    
    & \multirow{2}{*}{\textbf{ML-ER}}
    & \multirow{2}{*}{\textbf{Noise-ER}}
    & \multirow{2}{*}{\textbf{IR}}
    & \multirow{2}{*}{\textbf{SA}}
    & \multicolumn{3}{c}{\textbf{Mean}} \\
    \cmidrule(lr){10-12}
    &  &  &  &  &  &  &  &  & \textbf{Rec.} & \textbf{Rea.} & \textbf{CoT} \\
    \midrule

    PandaGPT~\cite{su2023pandagpt}
    & 28.5 & 23.2 & 25.2 & 41.4  & 28.9 & 26.3 & 29.2 & 37.7 & 21.6  & 38.4 & 30.0 \\

    Emotion-LLaMA~\cite{cheng2024emotion}
    & 31.7 & 19.7 & 31.7 & 36.4  & 29.5 & 49.3 & 27.3 & 40.7 & 21.1 & 42.4 & 31.8 \\

    AffectGPT~\cite{lian2025affectgpt}
    & 35.0 & 32.8 & 32.6 & \textbf{44.2}  & 31.3 & 50.7 & 30.0 & 47.9 & 19.7 & 57.4 & 38.6 \\

    \textit{Baseline}
    & 47.0 & 40.8 & 35.0 & 35.6  & 37.7 & 59.0 & 32.4 & 58.7 & 27.9 & 62.7 & 45.3 \\

    VideoAuto-R1~\cite{liu2026videoauto}
    & 49.4 & 44.4 & \textbf{40.2} & 33.1  & 37.3 & 64.8  & 32.8 & 56.7 & 34.5 & 58.8 & 46.6 \\

    \rowcolor{tabhighlight}
    \textbf{\ourmethod}
    & \textbf{57.3} & \textbf{49.6} & 38.7 & 37.2 & \textbf{42.2} & \textbf{72.1} & \textbf{33.9} & \textbf{62.9} & \textbf{38.4} & \textbf{64.6} & \textbf{51.5} \\
    \bottomrule
\end{tabular}
\vspace{-2mm}
\end{table*}

\begin{table*}[t]
\centering

\fontsize{7.5}{10}\selectfont
\setlength{\tabcolsep}{3.2pt}

\caption{\textbf{Slow Thinking vs.\ Fast Thinking on MER-UniBench under unified F1 metrics.} The four basic emotion recognition datasets are re-evaluated by F1. \textbf{R-Mean} averages the two emotion recognition tasks, \ie, fine-grained and basic emotion recognition. \ourmethod~ makes the final slow-thinking answer outperform its fast-thinking counterpart.}
\label{tab:main_fast}

\begin{tabular}{l l c @{\hspace{2pt}} cccc @{\hspace{2pt}} cccc @{\hspace{3pt}} c @{\hspace{2pt}} c}
    \toprule

    \multirow{2}{*}{\textbf{Model}} 
    & \multirow{2}{*}{\textbf{Mode}}
    & \multicolumn{1}{c}{\textbf{Fine (F1)}} 
    & \multicolumn{4}{c}{\textbf{Basic Emotion (F1)}} 
    & \multicolumn{4}{c}{\textbf{Sentiment Analysis (WAF)}} 
    & \multirow{2}{*}{\textbf{R-Mean}} 
    & \multirow{2}{*}{\textbf{Mean}} \\
    \cmidrule(lr){3-3} \cmidrule(lr){4-7} \cmidrule(lr){8-11}
    & & OV-MERD+ & MER23 & MER24 & MELD & IEMOCAP & MOSI & MOSEI & SIMS & SIMSv2 & & \\
    \midrule

    \multirow{2}{*}{\textit{Baseline}}
    & Fast
    & \textbf{66.87} & \textbf{61.92} & 64.37 & 46.18 & \textbf{53.13} 
    & \textbf{86.03} & \textbf{86.47} & 87.48 & \textbf{87.60} 
    & \textbf{58.49} & \textbf{71.11} \\
    
    & Slow
    & 66.86 & 60.58 & \textbf{65.06} & \textbf{46.20} & 51.86 
    & 83.48 & 86.31 & \textbf{87.78} & 86.84 
    & 58.11 & 70.52 \\

    \specialrule{\cmidrulewidth}{1pt}{1pt}

    \multirow{2}{*}{VideoAuto-R1}
    & Fast
    & \textbf{69.67} & \textbf{65.09} & \textbf{67.38} & \textbf{46.77} & \textbf{56.33}
    & \textbf{86.21} & \textbf{86.05} & 86.20 & \textbf{87.93} 
    & \textbf{61.05} & \textbf{72.40} \\
    
    & Slow
    & 69.49 & 64.20 & 67.20 & 46.36 & 55.32 
    & 85.75 & 85.86 & \textbf{86.27} & 87.83  
    & 60.51 & 72.03 \\

    \specialrule{\cmidrulewidth}{1pt}{1pt}

    \multirow{2}{*}{\textbf{\ourmethod}}
    & Fast
    & 70.10 & 64.41 & 67.21 & 46.89 & 55.71 
    & 85.30 & \textbf{87.43} & \textbf{88.57} & 87.49 
    & 60.86 & 72.56 \\

    & \cellcolor{tabhighlight}Slow
    & \cellcolor{tabhighlight}\textbf{70.68} 
    & \cellcolor{tabhighlight}\textbf{65.43} 
    & \cellcolor{tabhighlight}\textbf{68.46} 
    & \cellcolor{tabhighlight}\textbf{48.70} 
    & \cellcolor{tabhighlight}\textbf{55.73} 
    & \cellcolor{tabhighlight}\textbf{86.06}
    & \cellcolor{tabhighlight}86.73 
    & \cellcolor{tabhighlight}88.04 
    & \cellcolor{tabhighlight}\textbf{88.48} 
    & \cellcolor{tabhighlight}\textbf{61.80} 
    & \cellcolor{tabhighlight}\textbf{73.14}\\

    \bottomrule
\end{tabular}
\vspace{-2mm}
\end{table*}

\begin{figure*}[t]
\centering
\includegraphics[width=\linewidth]{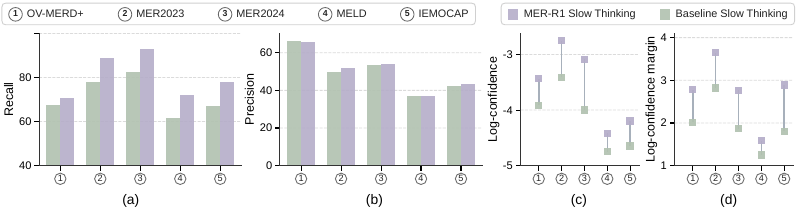}
\captionsetup{skip=6pt} 
\caption{\textbf{Effect of slow-fast thinking synergy}
(a) Recall. 
(b) Precision. 
(c) Confidence on ground-truth categories. 
(d) Confidence margin. 
\ourmethod~improves recall, preserves precision, boosts confidence on ground-truth categories, and strengthens confidence discrimination.}
\label{fig:fig4}
\vspace{-2mm}
\end{figure*}

\begin{table}[t]
\centering
\fontsize{7.5}{10}\selectfont
\setlength{\tabcolsep}{6.0pt}

\caption{\textbf{Main ablation study on MER-UniBench.} \textbf{RD}, \textbf{AD}, and \textbf{SFCC} denote Reward Disentanglement, Advantage Disentanglement, and Slow-Fast Confidence Calibration, respectively. \textbf{Mean\(_{\text{official}}\)} uses the original bench metrics, while \textbf{Mean\(_{\text{f1}}\)} uses F1 for the basic emotion recognition task.}
\label{tab:ablation_main}

\begin{tabular}{l c c c @{\hspace{8pt}} c c c c @{\hspace{4pt}} c c}
    \toprule
    \textbf{Variant}
    & \textbf{RD} & \textbf{AD} & \textbf{SFCC}
    & \textbf{Fine (F1)}
    & \textbf{Basic (Hitrate)}
    & \textbf{Basic (F1)}
    & \textbf{Sentiment (WAF)}
    & \textbf{Mean\(_{\text{official}}\)}
    & \textbf{Mean\(_{\text{f1}}\)} \\
    \midrule

    \textit{Baseline}
    &  &  & 
    & 66.86 & 72.40 & 55.93 & 86.10 & 77.87 & 70.55 \\

    Ex1 
    & \ding{51} &  & 
    & 68.06 & 77.50 & 57.18 & 85.89 & 80.18 & 71.15 \\

    Ex2 
    & \ding{51} & \ding{51} & 
    & 69.77 & 82.40 & 58.55 & 85.11 & 82.20 & 71.60 \\

    \rowcolor{tabhighlight}
    Ex3
    & \ding{51} & \ding{51} & \ding{51}
    & \textbf{70.68} & \textbf{82.90} & \textbf{59.58} & \textbf{87.32} & \textbf{83.50} & \textbf{73.15} \\
    \bottomrule
\end{tabular}
\vspace{-2mm}
\end{table}

\begin{table*}[t]
\centering
\fontsize{7.5}{10}\selectfont
\setlength{\tabcolsep}{8pt}

\caption{\textbf{Ablation on slow-fast confidence calibration.} The first block compares different calibration formulations, including confidence granularity, and the role of the precision calibration term. The second block compares different optimization spaces.}
\label{tab:ablation_micro_opt}

\begin{tabular}{l @{\hspace{4pt}} c c c c @{\hspace{4pt}} c c}
    \toprule
    \textbf{Variant}
    & \textbf{Fine (F1)}
    & \textbf{Basic (Hitrate)}
    & \textbf{Basic (F1)}
    & \textbf{Sentiment (WAF)}
    & \textbf{Mean\(_{\text{official}}\)}
    & \textbf{Mean\(_{\text{f1}}\)} \\
    \midrule

    \textit{Baseline}
    & 66.86 & 72.40 & 55.93 & 86.10 & 77.87 & 70.55 \\
    
    \midrule
    \multicolumn{7}{c}{\textbf{Study on calibration formulation}} \\
    \midrule

    A1: word-level confidence
    & 69.07 & 80.67 & 58.89 & 86.14 & 81.81 & 72.13 \\
    
    % A2: answer-only credit
    % & 69.40 & 81.11 & 59.25 & 86.40 & 82.16 & 72.45 \\

    A2: w/o precision calibration
    & 69.48 & 82.51 & 59.09 & 86.58 & 82.87 & 72.46 \\

    \midrule
    \multicolumn{7}{c}{\textbf{Study on optimization space}} \\
    \midrule

    B1: reward-space calibration
    & 69.03 & 81.83 & 59.06 & 85.19 & 81.91 & 71.78 \\

    B2: mixed-space calibration
    & 67.81 & \textbf{83.62} & 58.44 & 86.42 & 83.11 & 71.91 \\

    \midrule
    \rowcolor{tabhighlight}
    \textbf{\ourmethod}
     & \textbf{70.68} & 82.90 & \textbf{59.58} & \textbf{87.32} & \textbf{83.50} & \textbf{73.15} \\
    \bottomrule
\end{tabular}
\vspace{-2mm}
\end{table*}

\noindent\textbf{Datasets and metrics.}
We evaluate \ourmethod~on two multimodal emotion benchmarks: MER-UniBench~\cite{lian2025affectgpt} and MME-Emotion~\cite{zhang2025mme}.
On \textbf{MER-UniBench}, we follow its three-task setting with 9 datasets: fine-grained emotion recognition on OV-MERD+~\cite{lian2024ov}, evaluated by the \textit{F1-score} metric; basic emotion recognition on MER23~\cite{lian2023mer}, MER24~\cite{lian2024mer}, MELD~\cite{poria2019meld}, and IEMOCAP~\cite{busso2008iemocap}, evaluated by \textit{Hitrate}; and sentiment analysis on MOSI~\cite{zadeh2016mosi}, MOSEI~\cite{zadeh2018multimodal}, SIMS~\cite{yu2020ch}, and SIMSv2~\cite{liu2022make}, evaluated by \textit{Weighted Average F1-score (WAF)}.
Since \textit{Hitrate} mainly reflects recall and does not penalize over-prediction, we re-evaluate the four basic emotion recognition datasets using \textit{F1-score}.
We further evaluate on \textbf{MME-Emotion}, which covers 8 tasks:
lab, wild, noisy, fine-grained, and multi-label emotion recognition
(ER-Lab, ER-Wild, Noise-ER, FG-ER, ML-ER), together with sentiment analysis (SA),
fine-grained sentiment analysis (FG-SA), and intent recognition (IR).
MME-Emotion reports three metrics: \textit{Recognition} for emotion recognition,
\textit{Reasoning} for emotion reasoning, and \textit{CoT} as their average.
Since the original benchmark used GPT-4o as the evaluator, which is no longer available,
we instead use \texttt{gemini-3.1-flash-lite-preview}. More details are in Appendix~\ref{appendix:task-details}. 

\noindent\textbf{Implementation details.}
We use Qwen2.5-Omni~\cite{xu2025qwen2} as the backbone and train it on
MER-Caption+~\cite{lian2025affectgpt} with a two-stage pipeline.
We first perform SFT on 5k randomly sampled instances for 2 epochs with a
learning rate of $2\times 10^{-5}$, and then apply RL on the remaining data for 1 epoch with a learning rate of $2\times 10^{-6}$, sampling 4 responses per prompt.
We set both $\lambda_{\mathsf{cali}}$ and $\lambda_{\mathsf{fmt}}$ to $0.1$.
Experiments are conducted on 16 NVIDIA H100 GPUs. More details are in Appendix~\ref{app:training_details}.

\subsection{Main results}
\label{sec:main_mer}

\noindent\textbf{SoTA performance on MER-UniBench.}
Table~\ref{tab:main} reports the main results under the official MER-UniBench protocol.
\ourmethod~achieves a new state-of-the-art mean score of 83.50, outperforming the strongest reproduced baseline by 5.63 points.
The improvement is consistent across fine-grained emotion recognition, basic emotion recognition, and sentiment analysis.
Notably, \ourmethod~improves OV-MERD+ from 66.86 to 70.68, obtains large gains on MER23, MELD, and IEMOCAP, and achieves the best results on MOSI, MOSEI, and SIMSv2.
These results demonstrate the effectiveness of slow-fast thinking synergy across diverse multimodal emotion tasks.

\noindent\textbf{SoTA performance on MME-Emotion.}
Table~\ref{tab:mme_emotion_main} reports the results on MME-Emotion.
\ourmethod~achieves the best overall CoT mean of 51.5, outperforming all compared methods on this broader benchmark.
Compared with the baseline, \ourmethod~substantially improves the overall Recognition score from 27.9 to 38.4, with clear gains on ER-Lab, ER-Wild, ML-ER, Noise-ER, and SA.
Meanwhile, the overall Reasoning score also increases from 62.7 to 64.6, indicating that the improvement does not come at the cost of reasoning quality.
These results show that slow-fast thinking synergy generalizes beyond MER-UniBench and consistently strengthens both emotion recognition and reasoning.

\noindent\textbf{Fulfilling the Desiderata.}
\ourmethod~fulfills the Desiderata defined in Section~\ref{sec:desiderata}.
Table~\ref{tab:main_fast} compares slow and fast thinking under a unified F1-based evaluation, where the four basic emotion recognition datasets are re-evaluated by F1 rather than Hitrate.
Under this setting, the baseline still exhibits the thinking paradox: its final slow-thinking answer is worse than its fast-thinking counterpart.
VideoAuto-R1~\cite{liu2026videoauto} trains an \textit{answer-think-answer} format with separate answer rewards, but still leaves the final answer weaker than the early answer on emotion recognition, with its R-Mean decreasing from 61.05 to 60.51, as it lacks a mechanism for slow-fast complementarity.
In contrast, \ourmethod~successfully reverses this trend: \textbf{the slow-thinking answer outperforms fast thinking on all five recognition datasets}, improving the R-Mean from 60.86 to 61.80 and demonstrating that \ourmethod~turns slow thinking into stronger final recognition.

Figure~\ref{fig:fig4} further explains how \ourmethod~satisfies the two desiderata.
At the prediction level, Figure~\ref{fig:fig4}(a) and (b) show that \ourmethod~improves recall over the baseline while preserving precision.
At the confidence level, Figure~\ref{fig:fig4}(c) and (d) show that \ourmethod~assigns higher confidence to ground-truth categories and enlarges the margin between ground-truth and hard-negative categories.
Together with Table~\ref{tab:main_fast}, these results demonstrate that \ourmethod~fulfills the desiderata, turning slow-fast complementarity into a stronger final slow-thinking answer.

\subsection{Ablation studies and analyses}
\label{sec:ablation}

\noindent\textbf{Main ablation study.}
Table~\ref{tab:ablation_main} presents the main ablation study on MER-UniBench by progressively adding Reward Disentanglement (RD), Advantage Disentanglement (AD), and Slow-Fast Confidence Calibration (SFCC).
RD already brings clear improvements over the baseline, and AD further boosts both fine-grained and basic emotion recognition, showing the benefit of preserving separated optimization signals in the advantage space.
Adding SFCC yields the best performance across all metrics, improving Mean\(_{\text{official}}\) from 77.87 to 83.50 and Mean\(_{\text{f1}}\) from 70.55 to 73.15.
These results show that all three components contribute to the final gain of \ourmethod.

\noindent\textbf{Ablation on slow-fast confidence calibration.}
Table~\ref{tab:ablation_micro_opt} ablates the design of Slow-Fast Confidence Calibration.
A1 uses word-level confidence instead of category-level confidence and performs worse, showing the importance of calibrating in the emotion-wheel category space.
A2 removes precision calibration and is also inferior to the full model, indicating that recall-side calibration alone cannot preserve slow-thinking selectivity.
For optimization space, B1 applies calibration in the reward space, while B2 adds corresponding dual and calibration rewards before advantage normalization.
Both are weaker than \ourmethod~in overall performance, demonstrating that SFCC works best with category-level bidirectional calibration and fully disentangled advantage-space optimization.
\section{Conclusion and limitation}

\noindent\textbf{Conclusion.}
We reveal a thinking paradox in MER, where explicit reasoning does not necessarily improve recognition accuracy.
Our analysis shows that fast thinking provides recall-oriented coverage and correct-category confidence, while slow thinking offers precision-oriented selectivity.
Based on this complementarity, we propose \ourmethod, an RL framework that realizes slow-fast thinking synergy through dual-objective disentanglement and slow-fast confidence calibration.
Experiments on MER-UniBench and MME-Emotion show that \ourmethod~achieves SoTA performance and  fulfills the Desiderata, making the slow-thinking genuinely beneficial to emotion recognition.

\noindent\textbf{Limitation.}
This work is evaluated mainly on MER benchmarks, so its generalization to broader multimodal reasoning tasks remains open~\cite{Zhao_2025_ICCV,zhu2026look}.
Moreover, \ourmethod~depends on emotion-wheel-based category mapping and requires an additional fast-thinking forward pass during training, which may limit coverage of fine-grained emotions and increase computational cost.

\bibliographystyle{unsrtnat}
\bibliography{reference}

%%%%%%%%%%%%%%%%%%%%%%%%%%%%%%%%%%%%%%%%%%%%%%%%%%%%%%%%%%%%

\newpage
\appendix
{\LARGE\bf Appendix}
\appendix
\section{Theoretical analysis of dual-objective disentanglement}
\label{app:theory1}

We provide a theoretical analysis of the dual-objective disentanglement in
Section~\ref{sec:method1}. We first show that the standard $F_1$-based
advantage is affected by both the local sensitivities of $F_1$ and the reward variances. We then show that our two-step disentanglement removes these
factors by separating recall and precision before group normalization.

\subsection{Limitation of the $F_1$ advantage}

For the $i$-th response in a GRPO group, let $R_{\mathsf{R},i}$ and
$R_{\mathsf{P},i}$ denote its recall and precision rewards. The baseline first
combines them into a scalar $F_1$ reward:
\begin{equation}
R_{\mathsf{F},i}
=
\frac{2R_{\mathsf{R},i}R_{\mathsf{P},i}}
{R_{\mathsf{R},i}+R_{\mathsf{P},i}},
\end{equation}
and then computes the group-normalized advantage:
\begin{equation}
A_{\mathsf{F},i}
=
\frac{R_{\mathsf{F},i}-\mu_{\mathsf{F}}}{\sigma_{\mathsf{F}}},
\end{equation}
where $\mu_{\mathsf{F}}$ and $\sigma_{\mathsf{F}}$ are the group mean and
standard deviation of $R_{\mathsf{F}}$.

\begin{proposition}
For $\mathsf{k}\in\{\mathsf{R},\mathsf{P}\}$ and
$\mathsf{l}\neq\mathsf{k}$, the correlation between reward
$R_{\mathsf{k}}$ and the $F_1$ advantage $A_{\mathsf{F}}$ is approximated by:
\begin{equation}
\mathrm{Corr}(R_{\mathsf{k}},A_{\mathsf{F}})
\approx
\frac{
\alpha_{\mathsf{k}}\sigma_{\mathsf{k}}
+
\alpha_{\mathsf{l}}\rho\sigma_{\mathsf{l}}
}
{\sigma_{\mathsf{F}}}.
\end{equation}
Here, $\rho$ is the correlation between
$R_{\mathsf{R}}$ and $R_{\mathsf{P}}$;
$\alpha_{\mathsf{k}}$ is the local $F_1$ sensitivity to $R_{\mathsf{k}}$; and
$\sigma_{\mathsf{k}}$ is the standard deviation of $R_{\mathsf{k}}$.
\end{proposition}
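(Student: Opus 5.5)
The approach is a first-order (delta-method) linearization of the $F_1$ map around the group means, followed by the bilinearity of covariance. Write $f(r,p)=\tfrac{2rp}{r+p}$ and expand $R_{\mathsf{F},i}=f(R_{\mathsf{R},i},R_{\mathsf{P},i})$ around $(\mu_{\mathsf{R}},\mu_{\mathsf{P}})$:
\begin{equation*}
R_{\mathsf{F},i}\approx f(\mu_{\mathsf{R}},\mu_{\mathsf{P}})+\alpha_{\mathsf{R}}(R_{\mathsf{R},i}-\mu_{\mathsf{R}})+\alpha_{\mathsf{P}}(R_{\mathsf{P},i}-\mu_{\mathsf{P}}),
\end{equation*}
where the local sensitivities are the partial derivatives
\begin{equation*}
\alpha_{\mathsf{R}}=\partial_r f=\frac{2\mu_{\mathsf{P}}^2}{(\mu_{\mathsf{R}}+\mu_{\mathsf{P}})^2},\qquad
\alpha_{\mathsf{P}}=\partial_p f=\frac{2\mu_{\mathsf{R}}^2}{(\mu_{\mathsf{R}}+\mu_{\mathsf{P}})^2}.
\end{equation*}
These closed forms matter later: the ratio $\alpha_{\mathsf{R}}\sigma_{\mathsf{R}}/(\alpha_{\mathsf{P}}\sigma_{\mathsf{P}})$ equals $\tau$ from Proposition~\ref{prop:proposition1}, once $\tau$ is read with the sensitivities absorbed.

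Next, use that $A_{\mathsf{F}}$ is an affine transform of $R_{\mathsf{F}}$ with positive slope $1/\sigma_{\mathsf{F}}$. This gives $\mathrm{Corr}(R_{\mathsf{k}},A_{\mathsf{F}})=\mathrm{Corr}(R_{\mathsf{k}},R_{\mathsf{F}})=\mathrm{Cov}(R_{\mathsf{k}},R_{\mathsf{F}})/(\sigma_{\mathsf{k}}\sigma_{\mathsf{F}})$. Substituting the linearization and using bilinearity of the group covariance:
\begin{equation*}
\mathrm{Cov}(R_{\mathsf{k}},R_{\mathsf{F}})\approx\alpha_{\mathsf{k}}\sigma_{\mathsf{k}}^2+\alpha_{\mathsf{l}}\rho\,\sigma_{\mathsf{k}}\sigma_{\mathsf{l}}.
\end{equation*}
Dividing by $\sigma_{\mathsf{k}}\sigma_{\mathsf{F}}$ yields the claimed expression. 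The same linearization also gives $\sigma_{\mathsf{F}}^2\approx\alpha_{\mathsf{R}}^2\sigma_{\mathsf{R}}^2+\alpha_{\mathsf{P}}^2\sigma_{\mathsf{P}}^2+2\rho\,\alpha_{\mathsf{R}}\alpha_{\mathsf{P}}\sigma_{\mathsf{R}}\sigma_{\mathsf{P}}$. I would keep $\sigma_{\mathsf{F}}$ as the exact group standard deviation in the statement, since it is common to both $\mathsf{k}$ and cancels when the two correlations are compared.

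The main obstacle is justifying the ``$\approx$''. The expansion is only first order, and group rewards (with $K=4$ samples and discrete set-based recall/precision) need not concentrate near the mean. I would state explicitly that the remainder is controlled by the Hessian of $f$ times the squared deviations, i.e.\ $O(\sigma_{\mathsf{R}}^2+\sigma_{\mathsf{P}}^2)$ relative to the leading term, assuming $\mu_{\mathsf{R}}+\mu_{\mathsf{P}}$ is bounded away from zero so that $f$ is smooth there. Degenerate groups with $\sigma_{\mathsf{k}}=0$ or $\sigma_{\mathsf{F}}=0$ would be excluded, since there the correlation is undefined and GRPO gives zero advantage anyway.

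Finally, I would note the consequence that sets up Proposition~\ref{prop:proposition1}. The difference of the two correlations is
\begin{equation*}
(1-\rho)\,(\alpha_{\mathsf{R}}\sigma_{\mathsf{R}}-\alpha_{\mathsf{P}}\sigma_{\mathsf{P}})/\sigma_{\mathsf{F}},
\end{equation*}
whose sign is that of $\tau-1$ whenever $\rho<1$.
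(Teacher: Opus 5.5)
Your proposal is correct and follows the paper's proof: the same first-order expansion of $R_{\mathsf{F}}$ around $(\mu_{\mathsf{R}},\mu_{\mathsf{P}})$ with the same sensitivities $\alpha_{\mathsf{R}},\alpha_{\mathsf{P}}$, followed by bilinearity of the group covariance. Your reduction $\mathrm{Corr}(R_{\mathsf{k}},A_{\mathsf{F}})=\mathrm{Corr}(R_{\mathsf{k}},R_{\mathsf{F}})$ is the affine-invariance form of the paper's step $\sigma_{A_{\mathsf{F}}}=1$, and $\alpha_{\mathsf{R}}\sigma_{\mathsf{R}}/(\alpha_{\mathsf{P}}\sigma_{\mathsf{P}})$ equals $\tau$ exactly as defined, so $\tau$ needs no reinterpretation.
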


\begin{proof}
For simplicity, we omit the response index $i$ when no ambiguity arises. Let
$\mu_{\mathsf{R}},\mu_{\mathsf{P}}$ be the group means of $R_{\mathsf{R}}$ and $R_{\mathsf{P}}$ respectively. By the first-order Taylor expansion around
$(\mu_{\mathsf{R}},\mu_{\mathsf{P}})$, the scalar $F_1$ reward can be locally
approximated as:
\begin{equation}
R_{\mathsf{F}}
\approx
\frac{2\mu_{\mathsf{R}}\mu_{\mathsf{P}}}
{\mu_{\mathsf{R}}+\mu_{\mathsf{P}}}
+
\alpha_{\mathsf{R}}(R_{\mathsf{R}}-\mu_{\mathsf{R}})
+
\alpha_{\mathsf{P}}(R_{\mathsf{P}}-\mu_{\mathsf{P}}),
\end{equation}
where:
\begin{equation}
\begin{aligned}
\alpha_{\mathsf{R}}
&=
\left.
\frac{\partial R_{\mathsf{F}}}{\partial R_{\mathsf{R}}}
\right|_{(\mu_{\mathsf{R}},\mu_{\mathsf{P}})}
=
\frac{2\mu_{\mathsf{P}}^2}
{(\mu_{\mathsf{R}}+\mu_{\mathsf{P}})^2},
\qquad
\alpha_{\mathsf{P}}
=
\left.
\frac{\partial R_{\mathsf{F}}}{\partial R_{\mathsf{P}}}
\right|_{(\mu_{\mathsf{R}},\mu_{\mathsf{P}})}
=
\frac{2\mu_{\mathsf{R}}^2}
{(\mu_{\mathsf{R}}+\mu_{\mathsf{P}})^2}.
\end{aligned}
\end{equation}
Subtracting the group mean from the local approximation and normalizing by
$\sigma_{\mathsf{F}}$ give:
\begin{equation}
R_{\mathsf{F}}-\mu_{\mathsf{F}}
\approx
\alpha_{\mathsf{R}}(R_{\mathsf{R}}-\mu_{\mathsf{R}})
+
\alpha_{\mathsf{P}}(R_{\mathsf{P}}-\mu_{\mathsf{P}}),
\qquad
A_{\mathsf{F}}
\approx
\frac{
\alpha_{\mathsf{R}}(R_{\mathsf{R}}-\mu_{\mathsf{R}})
+
\alpha_{\mathsf{P}}(R_{\mathsf{P}}-\mu_{\mathsf{P}})
}
{\sigma_{\mathsf{F}}}.
\end{equation}

We first compute the correlation between $R_{\mathsf{R}}$ and
$A_{\mathsf{F}}$. By definition:
\begin{equation}
\mathrm{Corr}(R_{\mathsf{R}},A_{\mathsf{F}})
=
\frac{
\mathrm{Cov}(R_{\mathsf{R}},A_{\mathsf{F}})
}
{
\sigma_{\mathsf{R}}\sigma_{A_{\mathsf{F}}}
}
=
\frac{
\mathrm{Cov}(R_{\mathsf{R}},A_{\mathsf{F}})
}
{\sigma_{\mathsf{R}}},
\end{equation}
where the equality follows from the group normalization of
$A_{\mathsf{F}}$, which gives $\sigma_{A_{\mathsf{F}}} = 1$. Using the expression of
$A_{\mathsf{F}}$, together with
$\mathrm{Var}(R_{\mathsf{R}})=\sigma_{\mathsf{R}}^2$ and
$\mathrm{Cov}(R_{\mathsf{R}},R_{\mathsf{P}})
=\rho\sigma_{\mathsf{R}}\sigma_{\mathsf{P}}$, we obtain:
\begin{equation}
\begin{aligned}
\mathrm{Cov}(R_{\mathsf{R}},A_{\mathsf{F}})
&\approx
\frac{
\alpha_{\mathsf{R}}\mathrm{Var}(R_{\mathsf{R}})
+
\alpha_{\mathsf{P}}\mathrm{Cov}(R_{\mathsf{R}},R_{\mathsf{P}})
}
{\sigma_{\mathsf{F}}}
=
\frac{
\alpha_{\mathsf{R}}\sigma_{\mathsf{R}}^2
+
\alpha_{\mathsf{P}}\rho
\sigma_{\mathsf{R}}\sigma_{\mathsf{P}}
}
{\sigma_{\mathsf{F}}}.
\end{aligned}
\end{equation}
By symmetry, the corresponding result for $R_{\mathsf{P}}$ can be obtained in the same way.
Substituting the two covariance results into the correlation definition gives:
\begin{equation}
\begin{alignedat}{2}
\mathrm{Corr}(R_{\mathsf{R}},A_{\mathsf{F}})
&\approx
\frac{
\alpha_{\mathsf{R}}\sigma_{\mathsf{R}}
+
\alpha_{\mathsf{P}}\rho\sigma_{\mathsf{P}}
}
{\sigma_{\mathsf{F}}},
\qquad&
\mathrm{Corr}(R_{\mathsf{P}},A_{\mathsf{F}})
&\approx
\frac{
\alpha_{\mathsf{P}}\sigma_{\mathsf{P}}
+
\alpha_{\mathsf{R}}\rho\sigma_{\mathsf{R}}
}
{\sigma_{\mathsf{F}}}.
\end{alignedat}
\end{equation}
This completes the proof.
\end{proof}

\noindent\textbf{Implication.} Proposition~2 shows that the correlation between each reward and the $F_1$
advantage depends on both the local $F_1$ sensitivity
$\alpha_{\mathsf{k}}$ and the reward standard deviation $\sigma_{\mathsf{k}}$,
especially when $\rho$ is small. Therefore, the baseline
tends to emphasize the reward with larger local sensitivity or larger standard
deviation, even when both recall and precision are important.

\subsection{Effect of dual-objective disentanglement}

\begin{proposition}
The correlation between each reward and the disentangled advantage is:
\begin{equation}
\mathrm{Corr}(R_{\mathsf{R}},A_{\mathsf{dual}})
=
\mathrm{Corr}(R_{\mathsf{P}},A_{\mathsf{dual}})
=
\sqrt{\frac{1+\rho}{2}}.
\end{equation}
\end{proposition}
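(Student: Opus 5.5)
The computation is exact, with no Taylor approximation, because $A_{\mathsf{dual}}$ is linear in the standardized rewards. Write the group-standardized quantities
\[
Z_{\mathsf{R}}=\frac{R_{\mathsf{R}}-\mu_{\mathsf{R}}}{\sigma_{\mathsf{R}}},\qquad Z_{\mathsf{P}}=\frac{R_{\mathsf{P}}-\mu_{\mathsf{P}}}{\sigma_{\mathsf{P}}},
\]
so that by Eq.~\eqref{eq:macro_adv} $A_{\mathsf{dual}}=Z_{\mathsf{R}}+Z_{\mathsf{P}}$. All moments are taken with respect to the empirical distribution over the $K$ responses in the group, the same convention as in the proof of the preceding proposition. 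We assume $\sigma_{\mathsf{R}},\sigma_{\mathsf{P}}>0$ so that the advantages are defined. Each $Z_{\mathsf{k}}$ has group mean $0$ and group variance $1$, and $\mathrm{Cov}(Z_{\mathsf{R}},Z_{\mathsf{P}})=\rho$. This uses that correlation is invariant under positive affine maps, with $\rho$ the within-group correlation of $R_{\mathsf{R}}$ and $R_{\mathsf{P}}$.

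The steps, in order, are as follows.

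First, compute the spread of the combined advantage. We have
\[
\mathrm{Var}(A_{\mathsf{dual}})=1+1+2\rho=2(1+\rho),
\]
so $\sigma_{A_{\mathsf{dual}}}=\sqrt{2(1+\rho)}$. Note that $A_{\mathsf{dual}}$ is not renormalized, unlike $A_{\mathsf{F}}$ in the previous proof.

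Second, compute the covariance. Since correlation is unchanged by positive affine transformations, $\mathrm{Corr}(R_{\mathsf{R}},A_{\mathsf{dual}})=\mathrm{Corr}(Z_{\mathsf{R}},A_{\mathsf{dual}})$. Bilinearity then gives
\[
\mathrm{Cov}(Z_{\mathsf{R}},Z_{\mathsf{R}}+Z_{\mathsf{P}})=1+\rho.
\]

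Third, divide:
\[
\mathrm{Corr}(R_{\mathsf{R}},A_{\mathsf{dual}})=\frac{1+\rho}{1\cdot\sqrt{2(1+\rho)}}=\sqrt{\frac{1+\rho}{2}}.
\]

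Fourth, the expression for $A_{\mathsf{dual}}$ is symmetric under swapping $\mathsf{R}$ and $\mathsf{P}$, so the same value holds for $R_{\mathsf{P}}$. This yields the equality in the statement and, in particular, the balanced-correlation claim of Proposition~\ref{prop:proposition1}.

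The only delicate point is the degenerate case $\rho=-1$. There $A_{\mathsf{dual}}\equiv 0$ within the group, so the correlation is undefined, or can be set to $0$ by convention, which matches $\sqrt{0}$. I would state this explicitly and otherwise assume $\rho>-1$. I would also stress that, unlike the $F_1$ case, no dependence on $\alpha_{\mathsf{k}}$ or $\sigma_{\mathsf{k}}$ survives, because the per-objective standardization removes both before the two signals are combined.
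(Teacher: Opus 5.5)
Your proposal is correct and follows essentially the same route as the paper: standardize the two rewards, get $\mathrm{Var}(A_{\mathsf{dual}})=2+2\rho$, compute the covariance as $1+\rho$ (the paper instead computes $\mathrm{Cov}(R_{\mathsf{R}},A_{\mathsf{dual}})=(1+\rho)\sigma_{\mathsf{R}}$ directly rather than invoking affine invariance), then divide and finish by symmetry. Your explicit treatment of the degenerate case $\rho=-1$, which the paper leaves implicit, is a worthwhile addition.
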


\begin{proof}
By definition, $A_{\mathsf{R}}$ and $A_{\mathsf{P}}$ are standardized rewards:
\begin{equation}
A_{\mathsf{R}}
=
\frac{R_{\mathsf{R}}-\mu_{\mathsf{R}}}{\sigma_{\mathsf{R}}},
\qquad
A_{\mathsf{P}}
=
\frac{R_{\mathsf{P}}-\mu_{\mathsf{P}}}{\sigma_{\mathsf{P}}}.
\end{equation}
Thus, both have zero mean and unit variance, and their covariance is
$\mathrm{Cov}(A_{\mathsf{R}},A_{\mathsf{P}})
=
\mathrm{Corr}(R_{\mathsf{R}},R_{\mathsf{P}})
=
\rho$.
Since $A_{\mathsf{dual}}=A_{\mathsf{R}}+A_{\mathsf{P}}$, its variance is:
\begin{equation}
\mathrm{Var}(A_{\mathsf{dual}})
=
\mathrm{Var}(A_{\mathsf{R}})
+
\mathrm{Var}(A_{\mathsf{P}})
+
2\mathrm{Cov}(A_{\mathsf{R}},A_{\mathsf{P}})
=
2+2\rho.
\end{equation}

We first compute the correlation between $R_{\mathsf{R}}$ and
$A_{\mathsf{dual}}$. The covariance term can be written as:
\begin{equation}
\begin{aligned}
\mathrm{Cov}(R_{\mathsf{R}},A_{\mathsf{dual}})
&=
\mathrm{Cov}(R_{\mathsf{R}},A_{\mathsf{R}})
+
\mathrm{Cov}(R_{\mathsf{R}},A_{\mathsf{P}})
\\
&=
\mathrm{Cov}
\left(
R_{\mathsf{R}},
\frac{R_{\mathsf{R}}-\mu_{\mathsf{R}}}{\sigma_{\mathsf{R}}}
\right)
+
\mathrm{Cov}
\left(
R_{\mathsf{R}},
\frac{R_{\mathsf{P}}-\mu_{\mathsf{P}}}{\sigma_{\mathsf{P}}}
\right)
\\
&=
\sigma_{\mathsf{R}}
+
\rho\sigma_{\mathsf{R}}
=
(1+\rho)\sigma_{\mathsf{R}}.
\end{aligned}
\end{equation}
Combining this with
$\sigma_{A_{\mathsf{dual}}}=\sqrt{2+2\rho}$, we obtain:
\begin{equation}
\mathrm{Corr}(R_{\mathsf{R}},A_{\mathsf{dual}})
=
\frac{
\mathrm{Cov}(R_{\mathsf{R}},A_{\mathsf{dual}})
}
{
\sigma_{\mathsf{R}}\sigma_{A_{\mathsf{dual}}}
}
=
\frac{
(1+\rho)\sigma_{\mathsf{R}}
}
{
\sigma_{\mathsf{R}}\sqrt{2+2\rho}
}
=
\sqrt{\frac{1+\rho}{2}}.
\end{equation}
By symmetry, the same derivation gives:
\begin{equation}
\mathrm{Corr}(R_{\mathsf{P}},A_{\mathsf{dual}})
=
\sqrt{\frac{1+\rho}{2}}.
\end{equation}
This completes the proof.
\end{proof}

\noindent\textbf{Implication.} Proposition~3 shows that dual-objective
disentanglement removes the local-sensitivity and standard-deviation factors
identified in Proposition~2 by normalizing recall and precision separately. As
a result, both rewards are equally correlated with $A_{\mathsf{dual}}$, preventing
optimization from improving one objective at the expense of the other.

\subsection{Proof of Proposition~1}
\label{app:proof_prop1}

\begin{restatedproposition}[Proposition~\ref{prop:proposition1}]
Let $\mu_{\mathsf{R}},\mu_{\mathsf{P}}$ and
$\sigma_{\mathsf{R}},\sigma_{\mathsf{P}}$ denote the group means and standard
deviations of the recall and precision rewards, and let
$\tau=\frac{\sigma_\mathsf{R}/\mu_\mathsf{R}^2}{\sigma_\mathsf{P}/\mu_\mathsf{P}^2}$ be the normalized group-wise variance ratio.
The correlations between each reward and the standard $F_1$ advantage satisfy
\begin{equation}
\begin{cases}
\mathsf{corr}(R_{\mathsf{R}},A_{\mathsf{F}})
>
\mathsf{corr}(R_{\mathsf{P}},A_{\mathsf{F}}),
& \text{if } \tau>1, \\[1mm]
\mathsf{corr}(R_{\mathsf{R}},A_{\mathsf{F}})
\leq
\mathsf{corr}(R_{\mathsf{P}},A_{\mathsf{F}}),
& \text{if } \tau\leq 1.
\end{cases}
\label{eq:f1_corr_bias2}
\end{equation}
In contrast, our dual-objective advantage gives \textbf{balanced correlations}:
\begin{equation}
\mathsf{corr}(R_{\mathsf{R}},A_{\mathsf{dual}})
=
\mathsf{corr}(R_{\mathsf{P}},A_{\mathsf{dual}}).
\end{equation}
\end{restatedproposition}

\begin{proof}
From Proposition~2, the correlations between recall/precision rewards and the
standard $F_1$ advantage are:
\begin{equation}
\begin{alignedat}{2}
\mathrm{Corr}(R_{\mathsf{R}},A_{\mathsf{F}})
&\approx
\frac{
\alpha_{\mathsf{R}}\sigma_{\mathsf{R}}
+
\alpha_{\mathsf{P}}\rho\sigma_{\mathsf{P}}
}
{\sigma_{\mathsf{F}}},
\qquad&
\mathrm{Corr}(R_{\mathsf{P}},A_{\mathsf{F}})
&\approx
\frac{
\alpha_{\mathsf{P}}\sigma_{\mathsf{P}}
+
\alpha_{\mathsf{R}}\rho\sigma_{\mathsf{R}}
}
{\sigma_{\mathsf{F}}}.
\end{alignedat}
\end{equation}
Taking their difference gives:
\begin{equation}
\mathrm{Corr}(R_{\mathsf{R}},A_{\mathsf{F}})
-
\mathrm{Corr}(R_{\mathsf{P}},A_{\mathsf{F}})
\approx
\frac{
(1-\rho)
(\alpha_{\mathsf{R}}\sigma_{\mathsf{R}}
-
\alpha_{\mathsf{P}}\sigma_{\mathsf{P}})
}
{\sigma_{\mathsf{F}}}.
\end{equation}
In the non-degenerate case with $\rho<1$ and $\sigma_{\mathsf{F}}>0$, the sign of
this difference is determined by
$\alpha_{\mathsf{R}}\sigma_{\mathsf{R}}
-
\alpha_{\mathsf{P}}\sigma_{\mathsf{P}}$. Using the local sensitivities in
Proposition~2, we have:
\begin{equation}
\alpha_{\mathsf{R}}\sigma_{\mathsf{R}}
-
\alpha_{\mathsf{P}}\sigma_{\mathsf{P}}
=
\frac{2}
{(\mu_{\mathsf{R}}+\mu_{\mathsf{P}})^2}
\left(
\mu_{\mathsf{P}}^2\sigma_{\mathsf{R}}
-
\mu_{\mathsf{R}}^2\sigma_{\mathsf{P}}
\right).
\end{equation}
Therefore:
\begin{equation}
\alpha_{\mathsf{R}}\sigma_{\mathsf{R}}
>
\alpha_{\mathsf{P}}\sigma_{\mathsf{P}}
\Longleftrightarrow
\frac{\sigma_{\mathsf{R}}}{\mu_{\mathsf{R}}^2}
>
\frac{\sigma_{\mathsf{P}}}{\mu_{\mathsf{P}}^2}
\Longleftrightarrow
\tau>1.
\end{equation}
Thus, when $\tau>1$,
$\mathrm{Corr}(R_{\mathsf{R}},A_{\mathsf{F}})
>
\mathrm{Corr}(R_{\mathsf{P}},A_{\mathsf{F}})$; otherwise,
$\mathrm{Corr}(R_{\mathsf{R}},A_{\mathsf{F}})
\le
\mathrm{Corr}(R_{\mathsf{P}},A_{\mathsf{F}})$.

Finally, Proposition~3 directly gives:
\begin{equation}
\mathrm{Corr}(R_{\mathsf{R}},A_{\mathsf{dual}})
=
\mathrm{Corr}(R_{\mathsf{P}},A_{\mathsf{dual}})
=
\sqrt{\frac{1+\rho}{2}},
\end{equation}
which proves the balanced-correlation result for the dual-objective advantage.
\end{proof}

\section{Task details}
\label{appendix:task-details}

We evaluate \ourmethod~on two multimodal emotion benchmarks: MER-UniBench~\cite{lian2025affectgpt} and MME-Emotion~\cite{zhang2025mme}. 
MER-UniBench evaluates open-vocabulary multimodal emotion recognition under emotion-wheel-based metrics, while MME-Emotion provides a llm-as-a-judge based evaluation of both emotion recognition and emotion reasoning.

\subsection{MER-UniBench}
\label{appendix:merunibench}

MER-UniBench~\cite{lian2025affectgpt} unifies nine multimodal emotion datasets into three task groups: fine-grained emotion recognition, basic emotion recognition, and sentiment analysis.
Since model outputs are free-form emotion words, direct string matching is insufficient.
Following prior work~\cite{lian2025affectgpt,lian2024ov}, we evaluate recognition outputs with an emotion-wheel-based normalization protocol.

\noindent\textbf{Emotion-wheel normalization.}
The emotion wheel maps diverse fine-grained emotion words into a shared category space.
We apply three normalization steps to both predictions and ground-truth labels.
First, morphological variants are reduced to their base form, \eg, \emph{happier} and \emph{happiness} are normalized to \emph{happy}.
Second, semantically equivalent emotion words are mapped to the same canonical form, \eg, \emph{joyful} and \emph{happy}.
Third, fine-grained emotion words are further mapped to their corresponding inner emotion categories according to each emotion wheel.
Following prior work~\cite{lian2025affectgpt}, we adopt $K=5$ emotion wheels, as shown in Figure~\ref{fig:ew_appendix}.
In each wheel, the outer labels correspond to fine-grained emotions, and the inner sectors define the corresponding level-1 emotion categories.

\begin{figure}[h]
    \centering
    \setlength{\tabcolsep}{2pt}
    \begin{subcaptionbox}{W1}[0.19\linewidth]
        {\includegraphics[width=0.95\linewidth]{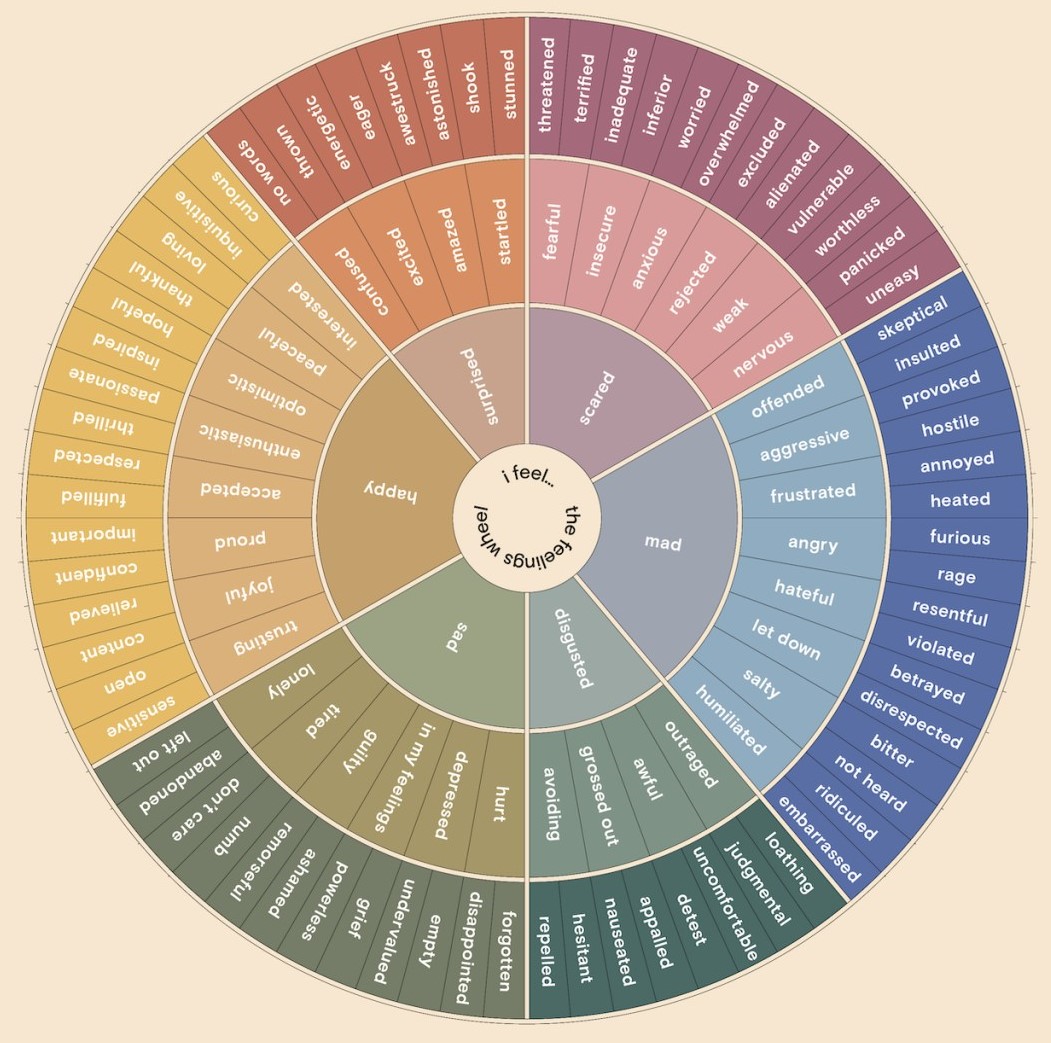}}
    \end{subcaptionbox}
    \begin{subcaptionbox}{W2}[0.19\linewidth]
        {\includegraphics[width=0.95\linewidth]{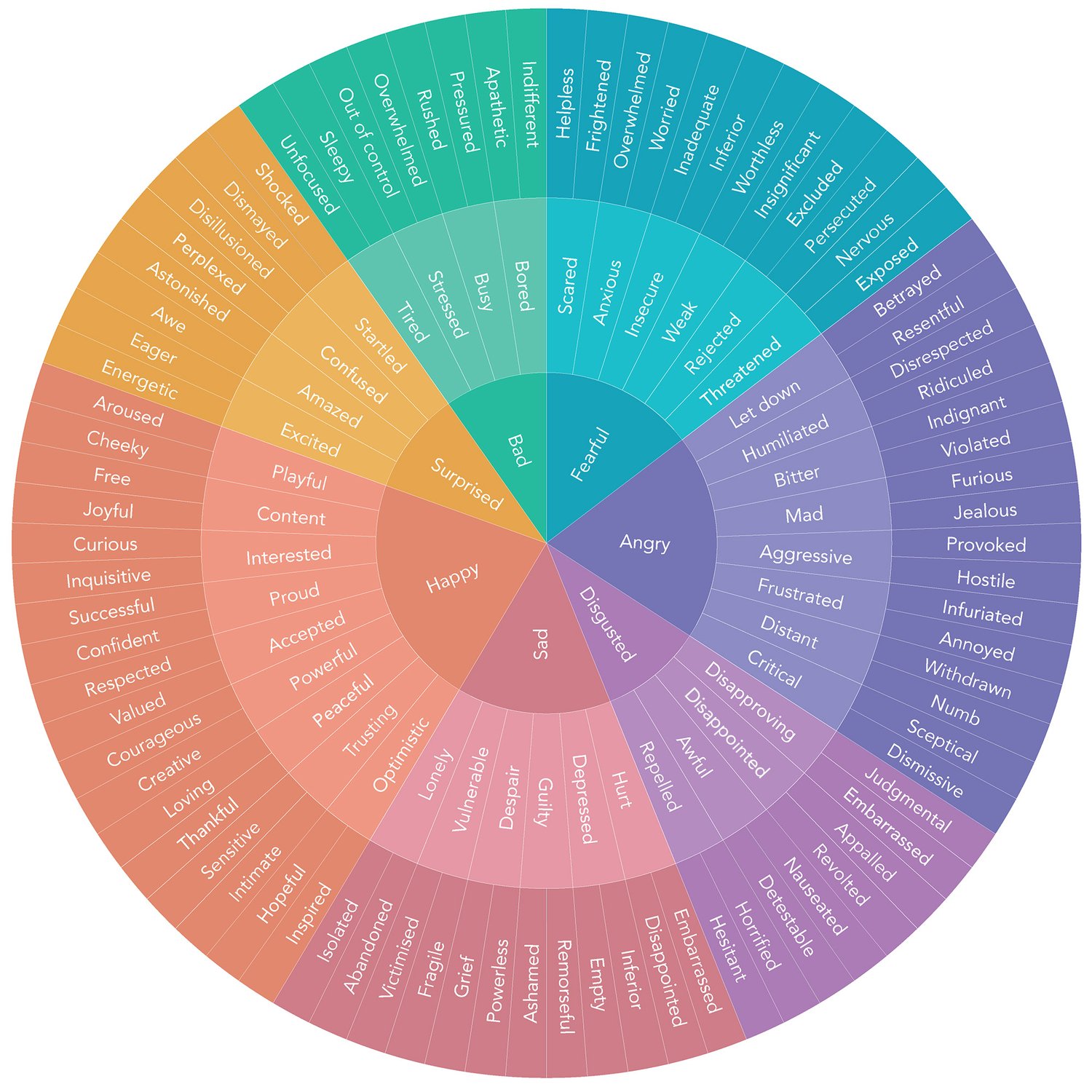}}
    \end{subcaptionbox}
    \begin{subcaptionbox}{W3}[0.19\linewidth]
        {\includegraphics[width=0.95\linewidth]{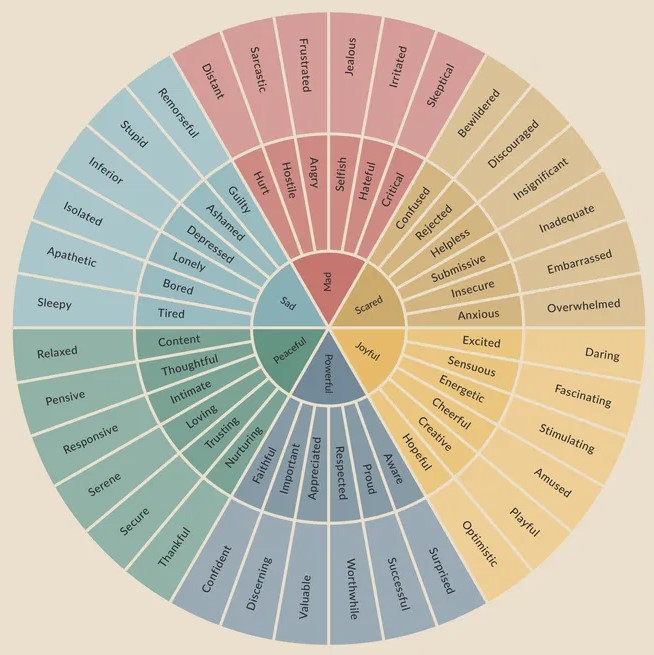}}
    \end{subcaptionbox}
    \begin{subcaptionbox}{W4}[0.19\linewidth]
        {\includegraphics[width=0.95\linewidth]{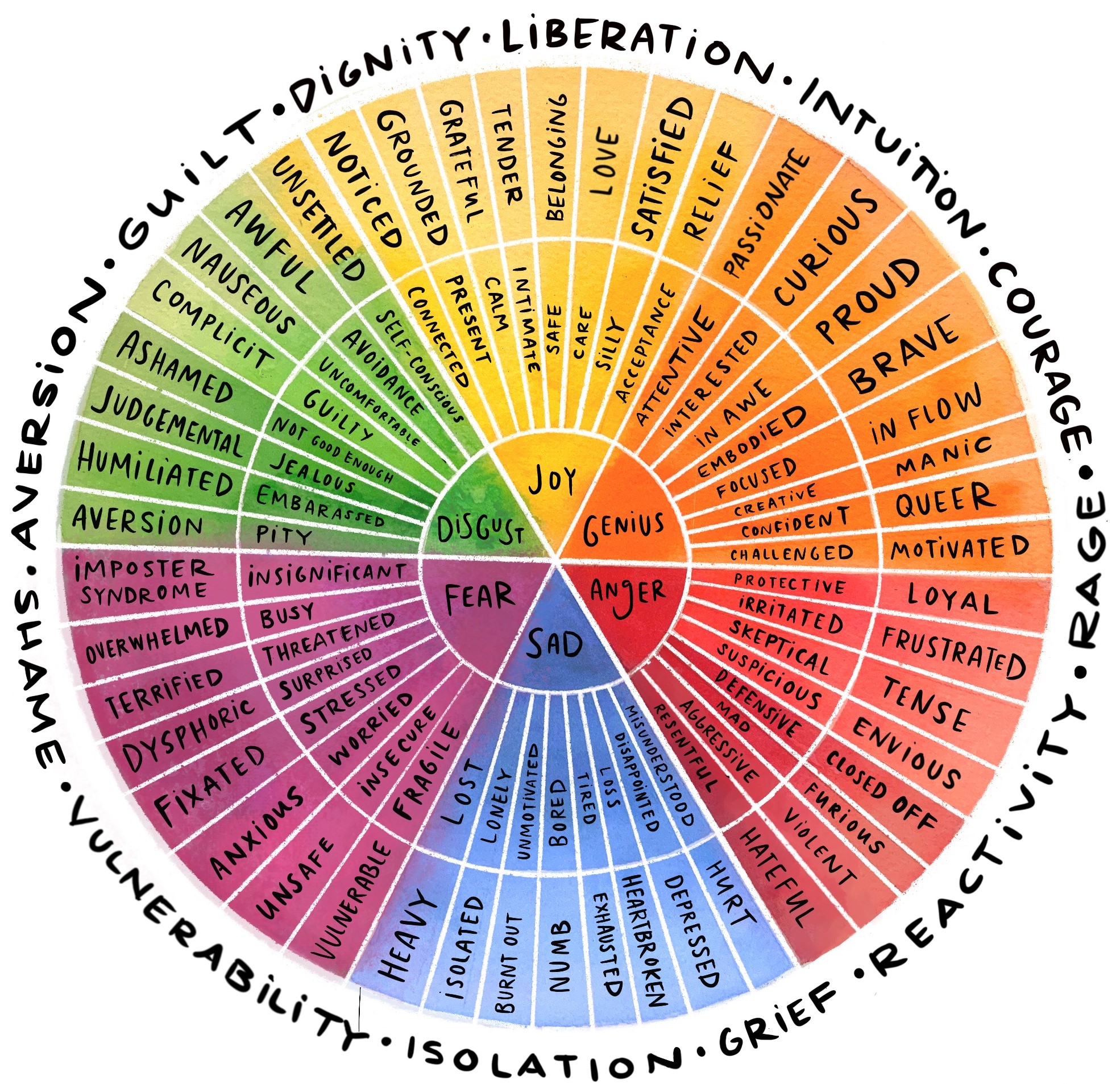}}
    \end{subcaptionbox}
    \begin{subcaptionbox}{W5}[0.19\linewidth]
        {\includegraphics[width=0.95\linewidth]{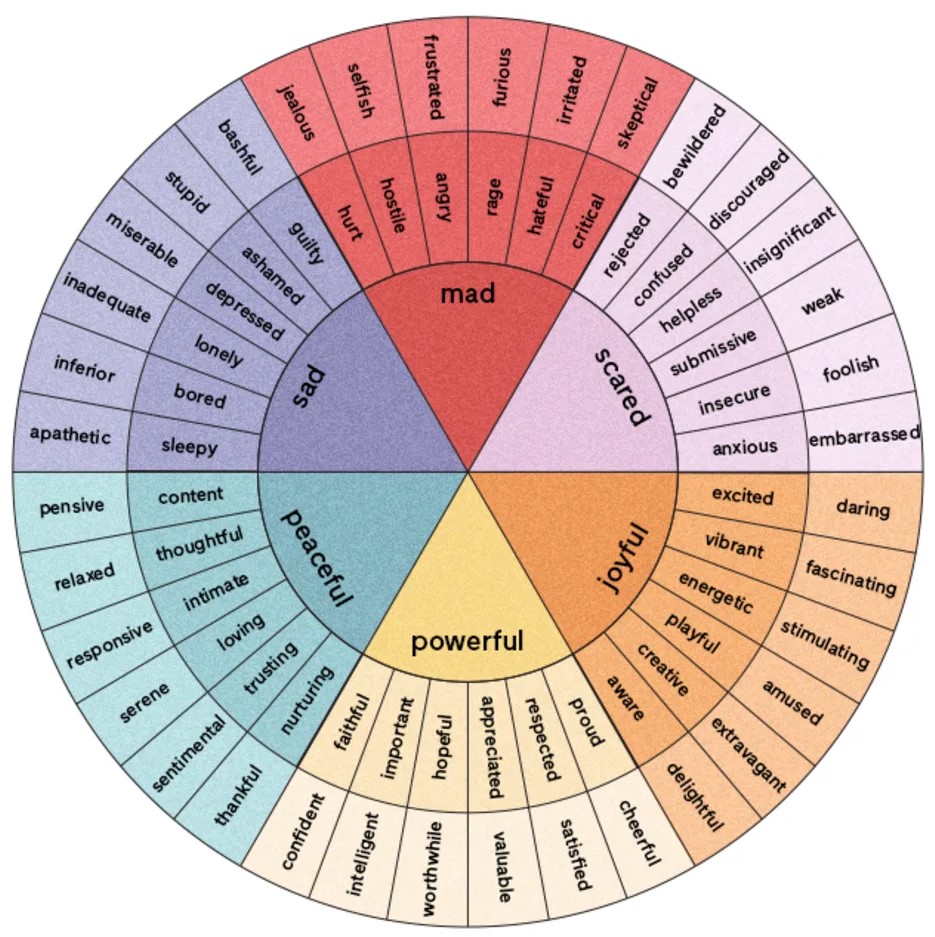}}
    \end{subcaptionbox}
    \caption{\textbf{Emotion wheels used in MER-UniBench evaluation.}
    We adopt five emotion wheels (W1--W5) to map free-form emotion words into level-1 emotion categories.}
    \label{fig:ew_appendix}
\end{figure}

For the $k$-th wheel $w_k$, the complete mapping function is defined as:
\begin{equation}
G_{w_k}(\cdot)
=
F_{l_3}^{w_k}
\bigl(
F_{l_2}(F_{l_1}(\cdot))
\bigr),
\quad k \in \{1,\ldots,K\},
\end{equation}
where $F_{l_1}$, $F_{l_2}$, and $F_{l_3}^{w_k}$ denote morphological normalization, semantic normalization, and wheel-specific category mapping, respectively.

\noindent\textbf{Set-level precision, recall, and F1.}
For sample $i$, let $\mathcal{Y}_i$ denote the ground-truth emotion set and 
$\hat{\mathcal{Y}}_i$ denote the predicted emotion set extracted from the 
\texttt{<answer>} field, with duplicate emotion words removed.
For each emotion wheel $w_k$, we compute the set-level precision and recall as:
\begin{equation}
\mathsf{P}_{s}^{k}
=
\frac{1}{N}\sum_{i=1}^{N}
\frac{
\left|
G_{w_k}(\mathcal{Y}_i)
\cap
G_{w_k}(\hat{\mathcal{Y}}_i)
\right|
}{
\left|
G_{w_k}(\hat{\mathcal{Y}}_i)
\right|
},
\end{equation}
\begin{equation}
\mathsf{R}_{s}^{k}
=
\frac{1}{N}\sum_{i=1}^{N}
\frac{
\left|
G_{w_k}(\mathcal{Y}_i)
\cap
G_{w_k}(\hat{\mathcal{Y}}_i)
\right|
}{
\left|
G_{w_k}(\mathcal{Y}_i)
\right|
}.
\end{equation}
The corresponding F1 score is defined as:
\begin{equation}
\mathsf{F}_{s}^{k}
=
\frac{
2\mathsf{P}_{s}^{k}\mathsf{R}_{s}^{k}
}{
\mathsf{P}_{s}^{k}+\mathsf{R}_{s}^{k}
}.
\end{equation}
The final emotion-wheel-based F1 score is obtained by averaging over all emotion wheels:
\begin{equation}
\mathsf{F}_{\mathsf{EW}}
=
\frac{1}{K}\sum_{k=1}^{K}\mathsf{F}_{s}^{k}.
\end{equation}

\noindent\textbf{Fine-grained emotion recognition.}
Fine-grained emotion recognition requires the model to predict one or more open-vocabulary emotion descriptors.
We evaluate this task on OV-MERD+~\cite{lian2024ov}, which extends OV-MERD~\cite{lian2023explainable}.
Following MER-UniBench, we use the emotion-wheel-based set-level F1 score as the primary metric.

\noindent\textbf{Basic emotion recognition.}
Basic emotion recognition asks the model to infer a basic emotion label from multimodal input.
We evaluate this task on MER2023~\cite{lian2023mer}, MER2024~\cite{lian2024mer}, MELD~\cite{poria2019meld}, and IEMOCAP~\cite{busso2008iemocap}.
Each sample is associated with a single majority-voted emotion label.
Under the official MER-UniBench protocol, this task is evaluated by Hitrate (HIT).
For each wheel $w_k$, HIT is defined as:
\begin{equation}
\mathrm{HIT}^{k}
=
\frac{1}{N}\sum_{i=1}^{N}
\mathbb{I}
\left[
G_{w_k}(y_i)
\in
G_{w_k}(\hat{\mathcal{Y}}_i)
\right],
\end{equation}
where $y_i$ is the ground-truth basic emotion label and $\mathbb{I}[\cdot]$ denotes the indicator function.
The final HIT score is averaged over the five emotion wheels.
Since HIT mainly reflects recall and does not penalize over-prediction, \textbf{we additionally report an F1-based evaluation for these four datasets} after the same emotion-wheel normalization.

\noindent\textbf{Sentiment analysis.}
Sentiment analysis requires the model to determine the polarity of the multimodal input.
We evaluate this task on CMU-MOSI~\cite{zadeh2016mosi}, CMU-MOSEI~\cite{zadeh2018multimodal}, CH-SIMS~\cite{yu2020ch}, and CH-SIMS v2~\cite{liu2022make}.
Since our model produces free-form emotion words rather than sentiment labels, we follow the MER-UniBench protocol and use Qwen2.5-7B-Instruct to map the extracted emotion words into a sentiment category, selected from \{\emph{positive}, \emph{negative}, \emph{neutral}\}.
We report the Weighted Average F-score (WAF) as the primary metric.
The prompt used for this mapping is shown below.

\begin{tcolorbox}[
    colback=blue!3,
    colframe=blue!33!black,
    colbacktitle=blue!9,
    coltitle=black,
    arc=2mm,
    auto outer arc,
    boxrule=0.35mm,
    left=1.5mm,
    right=1.5mm,
    top=1.5mm,
    bottom=1.5mm,
    width=\textwidth,
    fontupper=\small,
    title=\centering \textbf{Sentiment Mapping Prompt}
]
\textit{
Please assume the role of an expert in emotions. We provide a set of emotion words describing a character.
Please choose the most likely sentiment from the candidates: [positive, negative, neutral].
}
\end{tcolorbox}

\subsection{MME-Emotion}
\label{appendix:mme-emotion}

MME-Emotion~\cite{zhang2025mme} is a holistic benchmark for evaluating emotional intelligence in multimodal large language models.
Different from MER-UniBench, which mainly evaluates open-vocabulary emotion recognition with emotion-wheel-based metrics, MME-Emotion uses LLM-based judging to assess both the final emotion prediction and the reasoning process.

\noindent\textbf{Tasks.}
MME-Emotion contains eight evaluation tasks:
emotion recognition in laboratory scenarios (ER-Lab), emotion recognition in the wild (ER-Wild), emotion recognition under noisy conditions (Noise-ER), fine-grained emotion recognition (FG-ER), multi-label emotion recognition (ML-ER), sentiment analysis (SA), fine-grained sentiment analysis (FG-SA), and intent recognition (IR).
These tasks cover both controlled and in-the-wild scenarios, and evaluate different aspects of affective understanding, including categorical emotion recognition, fine-grained affective interpretation, sentiment polarity, and intent inference.

\noindent\textbf{Evaluation metrics.}
MME-Emotion reports three metrics.
\emph{Recognition} measures whether the final predicted emotion or affective label is correct.
\emph{Reasoning} evaluates whether the generated explanation is coherent, grounded, and supportive of the final answer.
\emph{CoT} is the average of Recognition and Reasoning.
Following the benchmark protocol, task-level results are judged by an evaluator model.
Since the original GPT-4o evaluator used by the benchmark is no longer available in our setting, we use \texttt{gemini-3.1-flash-lite-preview} as the evaluator for all compared methods to ensure a consistent evaluation protocol.
In the Table~\ref{tab:mme_emotion_main}, we report the CoT score for each task and the averaged Recognition, Reasoning, and CoT scores across all tasks.

\section{Implementation details}
\label{app:training_details}

We use Qwen2.5-Omni~\cite{xu2025qwen2} as the backbone and train only its Thinker module.
All models are trained on MER-Caption+~\cite{lian2025affectgpt} with a two-stage SFT-RL pipeline based on GRPO-style optimization.
We consider two reasoning baselines.
The reproduced AffectGPT-R1~\cite{lian2025affectgptr1} follows its original SFT/RL data allocation, while keeping the backbone and optimization framework consistent with our implementation.
Our stronger \emph{Baseline} adopts an RL-heavy allocation: we randomly sample 5k MER-Caption+ instances for SFT and use the remaining samples for RL.
For this \emph{Baseline} and \ourmethod, we train the SFT stage for 2 epochs with a learning rate of $2\times 10^{-5}$, and train the RL stage for 1 epoch with a learning rate of $2\times 10^{-6}$.
During RL, we sample $G=4$ responses per prompt, train for 3262 optimization steps, set the batch size to 1, and use gradient accumulation with 2 steps.
We set both $\lambda_{\mathsf{cali}}$ and $\lambda_{\mathsf{fmt}}$ to $0.1$.
All experiments are conducted on 16 NVIDIA H100 GPUs, and the RL stage takes approximately 18 hours.
For fair comparison, we evaluate all datasets using a single final checkpoint without dataset-specific checkpoint selection.

\begin{figure}[t]
    \centering
    \begin{minipage}{0.5\linewidth}
        \centering
        \includegraphics[width=\linewidth]{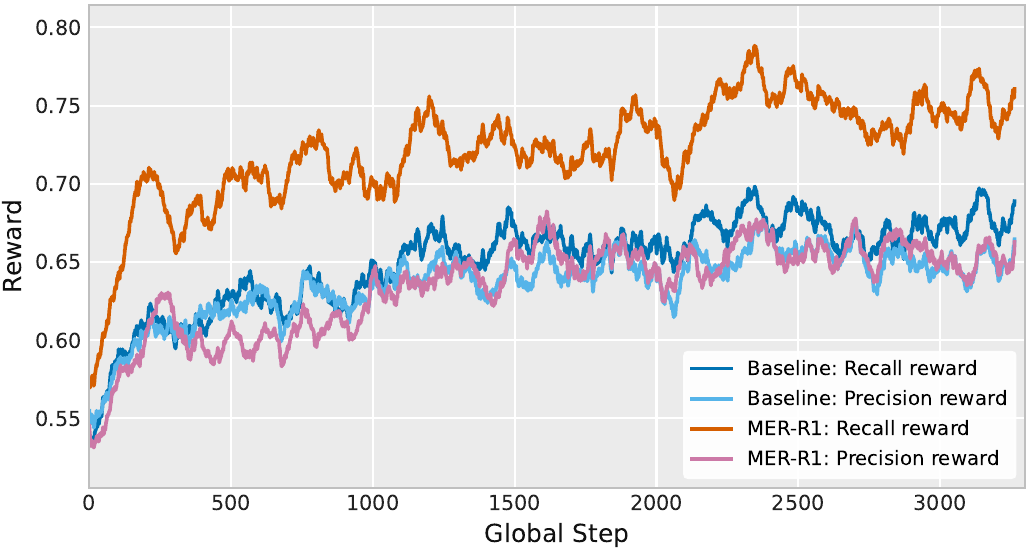}
        \caption{Training dynamics on dual rewards.}
        \label{fig:training_dynamics_rp}
    \end{minipage}
    \hfill
    \begin{minipage}{0.49\linewidth}
        \centering
        \includegraphics[width=\linewidth]{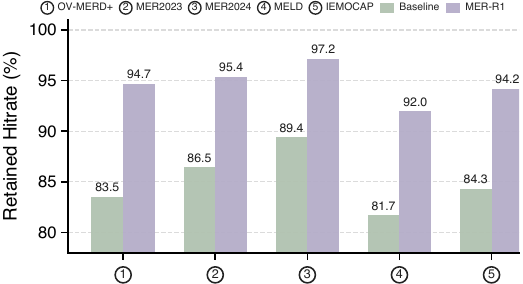}
        \caption{Retained hitrate comparision.}
        \label{fig:retained_hit}
    \end{minipage}
    \vspace{-1em}
\end{figure}

\section{Additional analysis}
\label{app:additional_analysis}

\noindent\textbf{Training dynamics on dual rewards.}
Figure~\ref{fig:training_dynamics_rp} compares the recall and precision reward dynamics of the \textit{Baseline} and \ourmethod.
\ourmethod~achieves a substantially higher recall reward, reaching around $0.75$ compared with about $0.67$ for the \textit{Baseline}.
Meanwhile, both methods maintain similar precision rewards, fluctuating around $0.65$.
This shows that \ourmethod~improves recall without sacrificing precision.
The dual-objective design therefore better balances the recall--precision trade-off and preserves the recall-oriented coverage that is often suppressed during slow thinking.

\noindent\textbf{Retained hitrate analysis.}
We define retained hitrate as the fraction of ground-truth emotion categories that are correctly predicted by fast thinking and still preserved in the final slow-thinking answer.
As shown in Figure~\ref{fig:retained_hit}, \ourmethod~consistently retains more correct fast-thinking predictions than the \textit{Baseline} across all five recognition benchmarks.
For example, the retained hitrate increases from $83.5$ to $94.7$ on OV-MERD+, from $86.5$ to $95.4$ on MER2023, and from $89.4$ to $97.2$ on MER2024, with similar gains on MELD and IEMOCAP.
These results show that \ourmethod~better preserves useful fast-thinking intuition during slow thinking, instead of filtering out correct categories as the \textit{Baseline} often does.
This supports the role of slow-fast confidence calibration in selectively retaining recall-oriented signals from fast thinking.

\section{Qualitative analysis}
\label{app:qualitative_analysis}

We provide qualitative examples to further illustrate how \ourmethod~realizes slow-fast thinking synergy in multimodal emotion recognition.
Specifically, we present three types of cases.
(i) \emph{Fast-intuition retention} refers to cases where fast thinking already predicts a correct emotion, but the baseline slow-thinking answer removes it after deliberation; \ourmethod~keeps this useful fast-thinking intuition in the final answer.
(ii) \emph{Spurious emotion suppression} refers to cases where fast thinking introduces incorrect emotions due to over-coverage; \ourmethod~suppresses these spurious predictions through slow-thinking selectivity.
(iii) \emph{Missed emotion recovery} refers to cases where baseline fast thinking fails to cover the ground-truth emotion, while \ourmethod~recovers it after slow reasoning.

For each case type, we show examples from both fine-grained emotion recognition and basic emotion recognition.
The fast-intuition retention examples show that \ourmethod~does not simply discard the useful predictions produced by fast thinking; instead, it preserves correct categories such as \textbf{angry} or \textbf{happy} that are missed by the baseline slow-thinking answer.
The spurious emotion suppression examples show that \ourmethod~can remove unsupported emotions such as \textbf{angry} or \textbf{resentment}, keeping the final prediction focused on the ground-truth emotions.
The missed emotion recovery examples further show that \ourmethod~can recover emotions overlooked by baseline fast thinking, such as \textbf{surprised}, \textbf{worried}, and other positive fine-grained emotions.
Together, these examples qualitatively demonstrate that \ourmethod~selectively combines the recall-oriented intuition of fast thinking with the precision-oriented selectivity of slow thinking.

\begin{table*}[ht]
\centering
\caption{Fine-grained emotion recognition example of fast-intuition retention.}
\label{tab:example1_1}

\begin{minipage}{1.0\textwidth}
\centering
\resizebox{\linewidth}{!}{
\begin{tabular}{>{\centering\arraybackslash}m{2.4cm}|>{\arraybackslash}m{14cm}}
\toprule
\multicolumn{2}{l}{\bf Example of fast-intuition retention} \\
\midrule

&
\raisebox{-0.9\height}{\includegraphics[height=3cm]{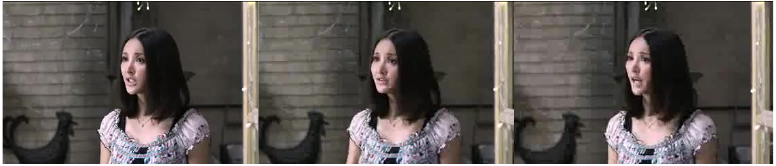}} \\
\midrule

Subtitle (En) & {I tell you, it's not easy for a woman who has divorced and has a child to find a partner, it's not as easy as you think.} \\
\midrule

Ground Truth & Labels: \textbf{angry}, dissatisfied \\
\midrule

\makecell[c]{Baseline\\Fast Thinking} &
\texttt{<think>} \texttt{</think>}

\texttt{<answer>}\textbf{angry}, frustrated, anxious, disappointed\texttt{</answer>} \\
\midrule

\makecell[c]{Baseline\\Slow Thinking} &
\texttt{<think>}In the text, the subtitle content says, "I'll tell you, it's not easy for a woman who has divorced and is raising a child to find a partner." Based on the woman's relaxed and comfortable posture and facial expression in the video clues, as well as the individual's high level of emotional stress and frustration in the audio clues, we can infer that this sentence may carry a negative or depressed tone. This sentence may be the woman's evaluation or complaint about the current situation or some difficulties. The woman's positive emotions and comfortable posture in the video clues indicate that she may be able to cope with the current situation or solve problems in some way, but based on the individual's negative emotions in the audio clues, we can infer that this sentence may carry a sense of disappointment or frustration. Therefore, this sentence may be the woman's negative evaluation of the current situation, aligning with her overall display of negative emotions.\texttt{</think>}

\texttt{<answer>}disappointment, frustration\texttt{</answer>} \\
\midrule

\makecell[c]{\ourmethod\\Slow Thinking} &
\texttt{<think>}In the text, the subtitle reads: "It's difficult for a woman to find a partner after divorce." This sentence may be the speaker's response to a certain problem or situation. Based on the audio clues describing the speaker's aggressive tone and high-pitched voice, as well as the audio clue of the speaker expressing feelings of anger or frustration, we can infer that this sentence may carry a tone of complaint or dissatisfaction. The speaker may be angry or dissatisfied about the difficulties or troubles of a woman finding a partner after divorce, hence using this statement to express their emotions.\texttt{</think>}

\texttt{<answer>}\textbf{angry}, frustrated, anxious, stressed\texttt{</answer>}

\smallskip
\noindent\textit{Explanation:} Baseline slow thinking misses the ground-truth emotion \textbf{angry}, while \ourmethod~slow thinking retains it.
\\
\bottomrule

\end{tabular}
}
\end{minipage}
% \vspace{-10mm}
\end{table*}
\begin{table*}[ht]
\centering
\caption{Basic emotion recognition example of fast-intuition retention.}
\label{tab:example1_1}

\begin{minipage}{1.0\textwidth}
\centering
\resizebox{\linewidth}{!}{
\begin{tabular}{>{\centering\arraybackslash}m{2.4cm}|>{\arraybackslash}m{14cm}}
\toprule
\multicolumn{2}{l}{\bf Example of fast-intuition retention} \\
\midrule

&
\raisebox{-0.9\height}{\includegraphics[height=3cm]{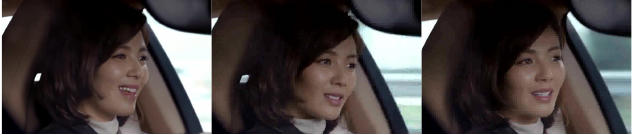}} \\
\midrule

Subtitle (En) & {Thank you all, thank you for not treating me as an outsider. Actually, before this, I thought I only had one friend in my life, Old Tan. I didn't expect to meet you all on the 22nd floor.} \\
\midrule

Ground Truth & Labels: \textbf{happy} \\
\midrule

\makecell[c]{Baseline\\Fast Thinking} &
\texttt{<think>} \texttt{</think>}

\texttt{<answer>}\textbf{happy}, surprised, grateful\texttt{</answer>} \\
\midrule

\makecell[c]{Baseline\\Slow Thinking} &
\texttt{<think>}In the text, the subtitle reads: "Thank you all, thank you for not treating me as an outsider. Actually, before this, I thought I only had one friend in my life, Old Tan, but I didn't expect to meet you all on the 22nd floor." This sentence may be the woman expressing gratitude to the people around her. Based on the audio clue of the woman's voice being filled with excitement and enthusiasm, as well as the video clue of the woman smiling and laughing while driving, we can infer that this sentence may carry a tone of excitement and gratitude. The woman may be surprised and grateful for the people around her, feeling that she has made more friends than she thought. Therefore, this sentence aligns with the overall positive emotions displayed by the woman.\texttt{</think>}

\texttt{<answer>}excitement, gratitude, surprise\texttt{</answer>}\\
\midrule

\makecell[c]{\ourmethod\\Slow Thinking} &
\texttt{<think>}In the text, the subtitle reads: "Thank you all, thank you for not treating me as an outsider. Actually, before this, I thought I only had one friend, Mr. Tan. But I didn't expect to meet you all on the 22nd floor." This sentence might be a woman's expression of gratitude and excitement to those around her. Based on the audio clues, with the woman's voice having a high pitch and slightly faster pace, as well as the video clues showing the woman's happiness and excitement, we can infer that this sentence might be delivered with a positive, appreciative, and joyful tone. The woman might be sharing her happiness and gratitude with those around her, while also feeling unexpected and fortunate for meeting new friends.\texttt{</think>}

\texttt{<answer>}\textbf{happy}, grateful, appreciative, positive\texttt{</answer>}

\smallskip
\noindent\textit{Explanation:} Baseline slow thinking misses the ground-truth emotion \textbf{happy}, while \ourmethod~slow thinking retains it.
\\
\bottomrule

\end{tabular}
}
\end{minipage}
\end{table*}
\begin{table*}[ht]
\centering
\caption{Fine-grained emotion recognition example of spurious emotion suppression.}
\label{tab:example1_1}

\begin{minipage}{1.0\textwidth}
\centering
\resizebox{\linewidth}{!}{
\begin{tabular}{>{\centering\arraybackslash}m{2.4cm}|>{\arraybackslash}m{14cm}}
\toprule
\multicolumn{2}{l}{\bf Example of spurious emotion suppression} \\
\midrule

&
\raisebox{-0.9\height}{\includegraphics[height=3cm]{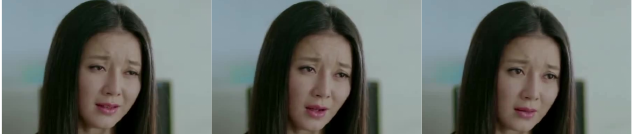}} \\
\midrule

Subtitle (En) & {You would feel better this way, wouldn't you?} \\
\midrule

Ground Truth & Labels: concerned, negative, sad \\
\midrule

\makecell[c]{Baseline\\Fast Thinking} &
\texttt{<think>} \texttt{</think>}

\texttt{<answer>}\textbf{angry}, frustrated, anxious, sad, disappointed\texttt{</answer>} \\
\midrule

\makecell[c]{\ourmethod\\Slow Thinking} &
\texttt{<think>}In the text, the subtitle reads, "Can you feel better this way?" This sentence may be a woman's response or concern for someone. Based on the audio clue describing the woman's crying and heavy breathing, as well as the video clue of the woman's sad and crying facial expression, we can infer that this sentence may carry a tone of concern or comfort. The woman may be responding to someone's distress or negative emotions, hoping that the other person can feel better or cope with the situation. Therefore, this sentence may express the woman's concern and care for the other person.\texttt{</think>}

\texttt{<answer>}concerned, worried, sad, anxious\texttt{</answer>}

\smallskip
\noindent\textit{Explanation:} Baseline fast thinking predicts the spurious emotion \textbf{angry}, whereas \ourmethod~slow thinking suppresses it and keeps the prediction focused on ground-truth emotions.
\\
\bottomrule

\end{tabular}
}
\end{minipage}
\end{table*}
\begin{table*}[ht]
\centering
\caption{Basic emotion recognition example of spurious emotion suppression.}
\label{tab:example1_1}

\begin{minipage}{1.0\textwidth}
\centering
\resizebox{\linewidth}{!}{
\begin{tabular}{>{\centering\arraybackslash}m{2.4cm}|>{\arraybackslash}m{14cm}}
\toprule
\multicolumn{2}{l}{\bf Example of spurious emotion suppression} \\
\midrule

&
\raisebox{-0.9\height}{\includegraphics[height=3cm]{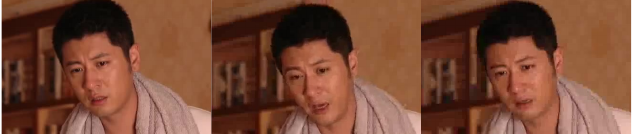}} \\
\midrule

Subtitle (En) & {Grandpa.} \\
\midrule

Ground Truth & Labels: sad \\
\midrule

\makecell[c]{Baseline\\Fast Thinking} &
\texttt{<think>} \texttt{</think>}

\texttt{<answer>}anxiety, nervousness, worry, confusion, frustration, disappointment, stress, sadness, \textbf{anger}, \textbf{resentment}, urgency, despair, panic\texttt{</answer>} \\
\midrule

\makecell[c]{\ourmethod\\Slow Thinking} &
\texttt{<think>}In the text, the caption reads: "Grandpa." This phrase might be a man's way of addressing someone or a title. Based on the video hints, with the man's facial expression showing contemplation and determination, and his body language suggesting relaxation, we can infer that this statement could be the man expressing concern, worry, or asking for help or assistance to his grandfather. The audio clues describing his voice with a slight tremble, at a slightly faster pace, and at a higher pitch, suggest he might be experiencing intense emotions such as anxiety, excitement, or stress. Therefore, this sentence could be the man expressing his emotional distress or seeking help.\texttt{</think>}

\texttt{<answer>}anxiety, sadness, concern, worry\texttt{</answer>}

\smallskip
\noindent\textit{Explanation:} Baseline fast thinking predicts the spurious emotion \textbf{anger}, \textbf{resentment}, whereas \ourmethod~slow thinking suppresses it and keeps the prediction focused on ground-truth emotions.
\\
\bottomrule

\end{tabular}
}
\end{minipage}
% \vspace{-8mm}
\end{table*}
\begin{table*}[ht]
\centering
\caption{Fine-grained emotion recognition example of missed emotion recovery.}
\label{tab:example1_1}

\begin{minipage}{1.0\textwidth}
\centering
\resizebox{\linewidth}{!}{
\begin{tabular}{>{\centering\arraybackslash}m{2.4cm}|>{\arraybackslash}m{14cm}}
\toprule
\multicolumn{2}{l}{\bf Example of missed emotion recovery} \\
\midrule

&
\raisebox{-0.9\height}{\includegraphics[height=3cm]{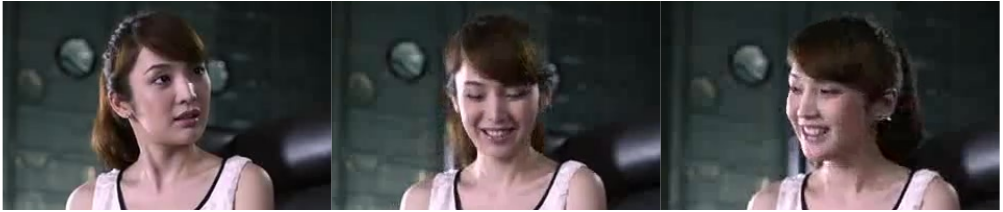}} \\
\midrule

Subtitle (En) & {It's hard to see it.} \\
\midrule

Ground Truth & Labels: relaxed, shy, happy, teasing \\
\midrule

\makecell[c]{Baseline\\Fast Thinking} &
\texttt{<think>} \texttt{</think>}

\texttt{<answer>}excited, anxious, frustrated, stressed, stressed out\texttt{</answer>} \\
\midrule

\makecell[c]{\ourmethod\\Slow Thinking} &
\texttt{<think>}In the text, caption content: "It's hard to see it." This sentence might be a woman's response or expression towards the man. Based on the audio clues, which include a woman's laughter and her voice filled with excitement and anticipation, it can be inferred that the woman is experiencing positive emotions and is comfortable around the man. The video hints at the woman's comfortable and relaxed state. From the woman's voice in the audio clues being high-pitched and enthusiastic, it can be speculated that she is experiencing positive emotions and might be responding to the man's words or actions in a playful or teasing manner. Therefore, this sentence could be a woman's sarcastic or humorous way of responding, fitting well with her overall positive emotions.\texttt{</think>}

\texttt{<answer>}happy, content, playful, teasing, amused\texttt{</answer>}

\smallskip
\noindent\textit{Explanation:} Baseline fast thinking misses all ground-truth emotions, whereas \ourmethod~slow thinking recovers the correct positive emotions, including \textbf{happy}, \textbf{content}, and \textbf{teasing}.
\\
\bottomrule

\end{tabular}
}
\end{minipage}
% \vspace{-8mm}
\end{table*}
\begin{table*}[ht]
\centering
\caption{Fine-grained emotion recognition example of missed emotion recovery.}
\label{tab:example1_1}

\begin{minipage}{1.0\textwidth}
\centering
\resizebox{\linewidth}{!}{
\begin{tabular}{>{\centering\arraybackslash}m{2.4cm}|>{\arraybackslash}m{14cm}}
\toprule
\multicolumn{2}{l}{\bf Example of missed emotion recovery} \\
\midrule

&
\raisebox{-0.9\height}{\includegraphics[height=3cm]{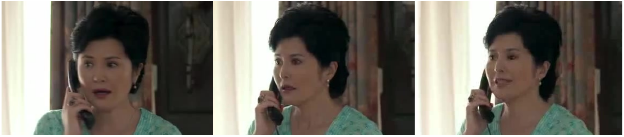}} \\
\midrule

Subtitle (En) & {Ah! We won't see each other on Sunday?} \\
\midrule

Ground Truth & Labels: surprised \\
\midrule

\makecell[c]{Baseline\\Fast Thinking} &
\texttt{<think>} \texttt{</think>}

\texttt{<answer>}disappointment, frustration, anxiety\texttt{</answer>} \\
\midrule

\makecell[c]{\ourmethod\\Slow Thinking} &
\texttt{<think>}In the text, the subtitle reads: "Ah, we're not going to see each other on Sunday." Based on the audio clues, which describe the speaker's voice with a high pitch and excited tone, along with the speaker's use of "ah" to express emotional distress, we can infer that this sentence might convey a sense of anxiety or surprise. The speaker's high pitch and excited tone suggest that they are experiencing strong emotions, while the "ah" in the subtitle could be the expression of surprise or confusion. Therefore, this sentence might reflect the speaker's concern or disappointment over the lack of a Sunday meeting.\texttt{</think>}

\texttt{<answer>}confused, anxious, surprised, frustrated\texttt{</answer>}

\smallskip
\noindent\textit{Explanation:} Baseline fast thinking misses the ground-truth emotion \textbf{surprised}, while \ourmethod~slow thinking successfully recovers this missed emotion in its final prediction.
\\
\bottomrule

\end{tabular}
}
\end{minipage}
% \vspace{-8mm}
\end{table*}
\begin{table*}[!t]
\centering
\caption{Basic emotion recognition example of missed emotion recovery.}
\label{tab:example1_1}
\vspace{-0.5em}

\begin{minipage}{1.0\textwidth}
\centering
\resizebox{\linewidth}{!}{
\begin{tabular}{>{\centering\arraybackslash}m{2.4cm}|>{\arraybackslash}m{14cm}}
\toprule
\multicolumn{2}{l}{\bf Example of missed emotion recovery} \\
\midrule

&
\raisebox{-0.9\height}{\includegraphics[height=3cm]{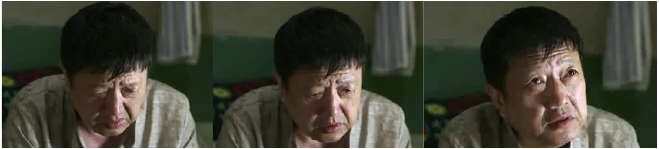}} \\
\midrule

Subtitle (En) & {Are you afraid that my presence here will affect your interaction with Cao Xiaoqiang?} \\
\midrule

Ground Truth & Labels: worried \\
\midrule

\makecell[c]{Baseline\\Fast Thinking} &
\texttt{<think>} \texttt{</think>}

\texttt{<answer>}angry, anxious, frustrated\texttt{</answer>} \\
\midrule

\makecell[c]{\ourmethod\\Slow Thinking} &
\texttt{<think>}{In the text, the caption reads: "Are you really that afraid of me here." This sentence might be a man's response to someone's concern or unease. Based on the audio cues describing a personal voice with a flat and monotonous tone, a higher pitch and slightly more aggressive nature, as well as the video clues showing the man's serious and thoughtful expression, we can infer that the statement might convey a tone of worry or concern. The man might feel uneasy or stressed because of the other person's question or response, thus this sentence could be his way of responding to express his own worry or anxiety.\texttt{</think>}}

\texttt{<answer>}concerned, fearful, worried, anxious\texttt{</answer>}

\smallskip
\noindent\textit{Explanation:} Baseline fast thinking misses the ground-truth emotion \textbf{worried}, whereas \ourmethod~slow thinking recovers precise emotions such as \textbf{worried} and \textbf{concerned}, with other predictions remaining contextually plausible.
\\
\bottomrule

\end{tabular}
}
\end{minipage}
\end{table*}

%%%%%%%%%%%%%%%%%%%%%%%%%%%%%%%%%%%%%%%%%%%%%%%%%%%%%%%%%%%%

\end{document}